\def\eqref#1{equation~\ref{#1}}
\def\1{\bm{1}}
\def\rvw{{\mathbf{w}}}
\def\rmE{{\mathbf{E}}}
\def\vm{{\bm{m}}}
\DeclareMathAlphabet{\mathsfit}{\encodingdefault}{\sfdefault}{m}{sl}
\SetMathAlphabet{\mathsfit}{bold}{\encodingdefault}{\sfdefault}{bx}{n}
\def\gA{{\mathcal{A}}}
\def\gC{{\mathcal{C}}}
\def\gG{{\mathcal{G}}}
\def\gJ{{\mathcal{J}}}
\def\gM{{\mathcal{M}}}
\def\gN{{\mathcal{N}}}
\def\gS{{\mathcal{S}}}
\DeclareMathOperator*{\argmax}{arg\,max}
\newcommand{\mb}[1]{\mathbf{#1}}
\newcommand{\mc}[1]{\mathcal{#1}}
\renewcommand{\algorithmicrequire}{\textbf{Input:}}
\renewcommand{\algorithmicensure}{\textbf{Output:}}
\theoremstyle{plain}
\newtheorem{theorem}{Theorem}[section]
\newtheorem{lemma}[theorem]{Lemma}
\theoremstyle{definition}
\newtheorem{definition}[theorem]{Definition}
\theoremstyle{remark}
\newcommand{\framework}{{COLE} framework\xspace}
\newcommand{\algo}{$\text{COLE}_\text{SV}$\xspace} 
\newcommand{\algoR}{$\text{COLE}_\text{R}$\xspace} 
\newcommand{\changes}[1]{\textcolor{black}{#1}}
\begin{document}

\title{Tackling Cooperative Incompatibility for \\ Zero-Shot Human-AI Coordination}

\author{\name Yang Li\footnotemark[1] \email yang.li-4@manchester.ac.uk \\
       \addr  Department of Computer Science, The University of Manchester, Manchester, UK
       \AND
       \name Shao Zhang\footnotemark[1] \email shaozhang@sjtu.edu.cn \\
       \name Jichen Sun \email sunjichen@sjtu.edu.cn \\
       \name Wenhao Zhang \email wenhao\_zhang@sjtu.edu.cn \\
       \addr Shanghai Jiao Tong University, Shanghai, China
       \AND
       \name Yali Du \email yali.du@kcl.ac.uk \\
       \addr Department of Informatics, King's College London, London, UK
       \AND
       \name Ying Wen\footnotemark[2]  \email ying.wen@sjtu.edu.cn \\
       \name Xinbing Wang \email xwang8@sjtu.edu.cn  \\
       \addr Shanghai Jiao Tong University, Shanghai, China
       \AND
       \name Wei Pan\footnotemark[2] \email wei.pan@manchester.ac.uk \\
       \addr Department of Computer Science, The University of Manchester, Manchester, UK}


\maketitle
\renewcommand{\thefootnote}{\fnsymbol{footnote}} 
\footnotetext[1]{Equal contribution.} 
\footnotetext[2]{Corresponding authors.}

\begin{abstract}
\changes{
Securing coordination between AI agent and teammates (human players or AI agents) in contexts involving unfamiliar humans continues to pose a significant challenge in Zero-Shot Coordination. 
The issue of cooperative incompatibility becomes particularly prominent when an AI agent is unsuccessful in synchronizing with certain previously unknown partners.} Traditional algorithms have aimed to collaborate with partners by optimizing fixed objectives within a population, fostering diversity in strategies and behaviors. However, these techniques may lead to learning loss and an inability to cooperate with specific strategies within the population, \changes{a phenomenon named cooperative incompatibility in learning. 
In order to solve cooperative incompatibility in learning and effectively address the problem in the context of ZSC}, we introduce the \textbf{C}ooperative \textbf{O}pen-ended \textbf{LE}arning (\textbf{COLE}) framework, which formulates open-ended objectives in cooperative games with two players using perspectives of graph theory to evaluate and pinpoint the cooperative capacity of each strategy. 
\changes{We present two practical algorithms, specifically \algo and \algoR, which incorporate insights from game theory and graph theory.}
We also show that COLE could effectively overcome the cooperative incompatibility from theoretical and empirical analysis. Subsequently, we created an online Overcooked human-AI experiment platform, the COLE platform, which enables easy customization of questionnaires, model weights, and other aspects. Utilizing the COLE platform, we enlist 130 participants for human experiments. Our findings reveal a preference for our approach over state-of-the-art methods using a variety of subjective metrics. Moreover, objective experimental outcomes in the Overcooked game environment indicate that our method surpasses existing ones when coordinating with previously unencountered AI agents and the human proxy model. Our code and demo are publicly available at \url{https://sites.google.com/view/cole-2023}. 
\end{abstract}

\section{Introduction}
Significant advancements in artificial intelligence (AI) research have led to groundbreaking solutions such as AlphaZero~\cite{alphazero} and Segment Everything~\cite{kirillov2023segment}, showing their potential to outperform humans in various tasks. 
\changes{
However, establishing efficient collaboration between AI and unseen partners (either human players or AI agents), a concept known as zero-shot coordination, continues to pose a significant challenge~\cite{Legg2007Universal,Hu2020OtherPlayFZ,de2022zero}.
}
The significance of zero-shot human-AI coordination becomes evident in various real-world applications, including manufacturing~\cite{LI2023102510}, autonomous vehicles~\cite{aoki2021human}, and assistant robots~\cite{Berardinis2020atyour}.
\changes{
The issue of \textit{``cooperative incompatibility''} becomes particularly prominent when an AI agent is unsuccessful in synchronizing with certain previously unknown partners.
As exemplified by multiplayer video games like Honor of King~\cite{hua2022honor} and Overcooked~\cite{HARL}, the constant requirement for players to collaborate with unseen partners is evident. However, AI agents, which are trained via the maximization of rewards, tend to exhibit deterministic strategies and differ from those of human players or other AI agents, leading to a failure in coordinating with some unseen players~\cite{Deheng2020Towards,Yiming2023Towards,HSP}.
}

One of the conventional methods to solve zero-shot human-AI coordination is self-play~\cite{SP}, which involves an iterative strategy refinement process through self-competition. 
Although SP can achieve equilibrium in a game~\cite{Fudenberg1998Theory}, strategies often develop specific behaviors and conventions to secure higher payoffs~\cite{Hu2020OtherPlayFZ}. 
As a result, a fully-converged SP strategy may face difficulties adapting to coordination with previously unencountered strategies and humans~\cite{Adam2018Learning,Hu2020OtherPlayFZ,MEP}. 
To address SP's limitations, current zero-shot coordination (ZSC) approaches concentrate on enhancing strategic or behavioral diversity by incorporating population-based training (PBT) to improve adaptability~\cite{HARL,Canaan2022Hanabi,MEP,TrajDi,charakorn2023generating}. 
PBT aims to boost cooperative performance with other strategies in the population, fostering zero-shot coordination with unknown strategies, which is achieved by maintaining a set of strategies to disrupt SP conventions~\cite{SP} and optimizing the rewards for each pair within the population. 
Most state-of-the-art (SOTA) methods focus on pre-training diverse populations~\cite{FCP,TrajDi} or introducing handcrafted techniques~\cite{Canaan2022Hanabi,MEP} to excel at cooperative games by optimizing fixed objectives within the population. 
And a recent work LIPO~\cite{charakorn2023generating} pursues solutions compatible with their partner agents but incompatible with others in the population.
These approaches have effectively tackled complex cooperative tasks, such as Overcooked~\cite{HARL} and Hanabi~\cite{hanabi2020Nolann}.

However, optimizing a fixed population-level objective, such as maximizing expected rewards within the population~\cite{FCP,TrajDi,MEP} or maximizing self-reward, but minimizing the reward with other agents~\cite{charakorn2023generating}, does not guarantee improved coordination capabilities for strategies within the population. 
Specifically, although overall performance may improve, simultaneous promotion of coordination abilities within the population may not occur. 
This phenomenon, which we refer to as ``\textit{cooperative incompatibility in learning}'', underscores the need to carefully weigh the trade-offs between overall performance and coordination ability when optimizing a fixed population-level objective.

To more effectively describe and formulate the cooperative incompatibility problem, we introduce Graphic-Form Games (GFGs) to reframe cooperative tasks from the perspective of norm-form games.
In a GFG, strategies are depicted as nodes, with edge weights between nodes representing the corresponding cooperative utility of the connected strategies.
Additionally, we derive Preference GFGs (P-GFGs) to profile each node's preferred counterpart in the population, where ``prefer'' signifies that a node achieves a higher score when cooperating with the preferred node rather than its neighbors.
Using (P-)GFGs, cooperative incompatibility in a learning algorithm can be assessed based on the extent to which others prefer the updated strategy.

To address cooperative incompatibility, we propose the Cooperative Open-ended LEarning (\textbf{COLE}) framework, which iteratively generates a new strategy that approximates the best response to empirical gamescapes within P-GFGs. 
We have shown that \framework converges to the local best-preferred strategy at a Q-sublinear rate when using in-degree centrality as the preference evaluation metric. 
\changes{
We propose two practical algorithms, \algo and \algoR, both of which comprise of a simulator, a solver, and a trainer that are incorporated to excel in two-player cooperative game Overcooked~\cite{HARL}. 
The primary distinguishing factor between \algo and \algoR is the solver component. 
While \algo employs the intuitive concept of the Shapley value to evaluate the adaptability of strategies and ascertain the cooperative incompatibility distribution, \algoR takes a different approach. It supplants the Shapley value with the average payoffs associated with nodes in the population as a measure of cooperative capability.}
The trainer aims to approximate the best responses to the cooperative incompatibility distribution mixture found in the population.

\changes{
The prevailing literature of zero-shot human-AI collaboration largely concentrates on the quantitative assessment, indicated in the body of works~\cite{HARL,FCP,HSP}, and often prioritizing metrics such as episode rewards and frequencies. 
Some methods conducted human-AI experiments to verify the zero-shot coordination performance with human players.
Notwithstanding, these investigations tend to scantily scratch the surface of more profound aspects, predominantly bypassing detailed aspects of the subject like preference of human~\cite{Xingzhou2023PECAN,FCP}. 
These oversights accentuate the existing deficiencies in the contemporary experimental environments, which necessitates the urgency to introduce our proposed experimental framework.
For the robust evaluation of human-AI collaboration performance, we devised an online Overcooked human-AI experimental pipeline, which is a cooperative task environment for two players as referenced in \cite{HARL}. 
Our evaluation approach incorporates a broad spectrum of subjective metrics that assess individual games involving AI agents while implementing a comprehensive comparison along the entire participation. Primarily, these encompassing metrics appraise aspects such as intention, contribution, and teamwork during collaboration with AI agents. Besides, they extend the evaluation to encompass fluency, preference, and understanding across all the collaborated agents.
Moreover, we developed all experimental components including ethic agreements, instructions, questionnaires, model weights into a unified experimental platform which we dubbed as the COLE-Platform.
To the best of our knowledge, our open-source human-AI experiment pipeline\footnote{\url{https://github.com/liyang619/COLE-Platform/}} is the first comprehensive human-AI experimentation pipeline for zero-shot human-AI coordination evaluation including turnkey experimental procedures and scale design. 
}

In this paper, we present the findings of a human-agent study involving 130 participants conducted on the COLE platform to assess the performance of our proposed algorithm. 
Participants were tasked with evaluating the AI agent's performance based on three criteria: the human's comprehension of AI intentions, AI's contribution to the task, and the effectiveness of human-AI collaboration. 
Furthermore, each participant compared the COLE platform with another randomly assigned AI agent, ranking the two agents in collaborative fluency, personal preference, and mutual understanding. Our findings demonstrate a clear preference for our algorithm over baseline approaches in various evaluation metrics.
In addition to objective evaluations, we also examined zero-shot human-AI coordination performance by testing our algorithm with previously unencountered baseline agents and a human proxy model. This approach helps to further demonstrate the adaptability and effectiveness of our algorithm in novel collaborative scenarios.
The human proxy model is a widely used behavior cloning model~\cite{HARL}. 
The results of the experiment demonstrate that \algo surpasses the recent SOTA methods in both evaluation protocols. 
Furthermore, analysis of GFGs and P-GFGs during the learning process of \algo reveals that the framework effectively overcomes cooperative incompatibility. 

This work represents an extension of our conference paper \cite{li2023cooperative}. \changes{Significant enhancements incorporated in this study include extending the concept of cooperative incompatibility to include zero-shot human-AI coordination, introducing a comprehensive human-AI experimental pipeline, providing new theoretical analysis to investigate COLE, proposing an extra practical algorithm, and executing a broader set of experiments.}
The primary contributions in this paper can be summarized as follows.
\begin{itemize}
    \item We introduce Graphic-Form Games (GFGs) and Preference Graphic-Form Games (P-GFGs) to intuitively reformulate cooperative tasks, which allows for a more efficient evaluation and identification of cooperative incompatibility during learning.
    \item We propose graphic-form gamescapes to help understand the objective and present the \framework to iteratively approximate the best responses preferred by most others. We prove that the algorithm will converge to the local best-preferred strategy, and the convergence rate will be Q-sublinear when using in-degree preference centrality.
    \item \changes{To the best of our knowledge, we propose the first comprehensive human-AI experimentation pipeline for zero-shot human-AI coordination evaluation including turnkey experimental procedures and scale design.} And we conducted a human-AI experiment involving 130 participants, and the results demonstrate a preference for our \algo over baselines using various subjective metrics. Additional objective experiments confirm the effectiveness of our proposed algorithm compared to SOTAs.
\end{itemize}

The remainder of this paper is structured as follows: Sections~\ref{sec:related} and~\ref{sec:preli} present related works and preliminaries, while Section~\ref{sec:cole} formalizes GFGs, and related concepts and introduces the COLE framework. \changes{Section~\ref{sec:cole_sv} describes two proposed practical algorithms \algoR and \algo}. The COLE human-AI experiment pipeline is detailed in Section~\ref{sec:cole_platform}, and Section~\ref{sec:exp} outlines the experimental settings and results. Finally, we summarize our findings, limitations, and future works in Section~\ref{sec:con}. Additionally, the Appendix offers supplementary information and in-depth proofs, including visual overviews of the human-AI experiment platform.

\section{Related Works}
\label{sec:related} 

\paragraph{Zero-Shot Human-AI Coordination.} 
The zero-shot human-AI coordination is closely related to zero-shot coordination (ZSC).
ZSC aims to train a strategy to coordinate effectively with unseen partners~\cite{Hu2020OtherPlayFZ}. 
Self-play~\cite{SP,HARL} is a traditional method of training a cooperative strategy, which involves iterative improvement of strategies by playing against oneself, but develops conventions between players and does not cooperate with other unseen strategies~\cite{Adam2018Learning,Hu2020OtherPlayFZ}. 
Other-play~\cite{Hu2020OtherPlayFZ} is proposed to break such conventions by adding permutations to one of the strategies.
However, this approach may be reduced to self-play if the game or environment does not have symmetries or has unknown symmetries. 
Another approach is population-based training (PBT)~\cite{PBT,HARL}, which trains strategies by interacting with each other in a population.
However, PBT does not explicitly maintain diversity and therefore does not coordinate with unseen partners\cite{FCP}. 

To achieve the goal of ZSC, recent research has focused on training robust strategies that use diverse populations of strategies~\cite{FCP,TrajDi,MEP}. 
Fictitious co-play (FCP)~\cite{FCP} obtains a population of periodically saved checkpoints during self-play training with different seeds and then trains the best response to the pre-trained population. 
TrajeDi~\cite{TrajDi} also maintains a pre-trained self-play population but encourages distinct behavior among the strategies. 
The maximum entropy population (MEP)~\cite{MEP} method proposes population entropy rewards to enhance diversity during pre-training. It employs prioritized sampling to select challenging-to-collaborate partners to improve generalization to previously unseen policies. 
Furthermore, methods such as MAZE~\cite{Xue2022Heter} and CG-MAS~\cite{Mahajan2022Gen} have been proposed to improve generalization ability through coevolution and combinatorial generalization.
The most recent work of LIPO~\cite{charakorn2023generating} pursues solutions compatible with their partner agents but incompatible with others in the population. 
LIPO is also based on the framework of the PBT algorithm, whose objective function consists of two fixed teams: maximizing the rewards when it pairs with itself, but minimizing the rewards when it pairs with other agents in the population.
Our previous research~\cite{li2023cooperative} emphasized the collaboration with unfamiliar AI agents through the continuous formulation and optimization of objectives. In this study, we expand the scope of \framework to facilitate cooperation with actual human players, which is validated by human players on the human-AI experimental platform we have developed.

\paragraph{Open-Ended Learning.}
Another related area of research is open-ended learning, which aims to continually discover and approach objectives~\cite{Srivastava2012Comtinually,Team2021OpenEndedLL,Meier2022Open}.
In MARL, most open-ended learning methods focus on zero-sum games, primarily posing adaptive objectives to expand the frontiers of strategies~\cite{NIPS2017_3323fe11,psrorn,pipelinepsro,yaodong_diverse,YingWen2021openenned,mcaleer2022self}.
In the specific context of ZSC, the MAZE method~\cite{Xue2022Heter} uses open-ended learning by maintaining two populations of strategies and partners and training them collaboratively throughout multiple generations. 
In each generation, MAZE pairs strategies and partners from the two populations and updates them together by optimizing a weighted sum of rewards and diversity. 
This method co-evolves the two populations of strategies and partners based on naive evaluations such as best or worst performance with strategies in partners.
Our proposed method, \framework, combines GFG and P-GFG in open-ended learning to evaluate and identify the cooperative ability of strategies to solve cooperative incompatibility efficiently with theoretical guarantee.

\paragraph{Coordination Graphs.} Another domain closely associated with our research is the field of coordination graphs (CGs)~\cite{guestrin2002coordinated}. CGs offer an influential framework for modeling intricate interactions and dependencies among multiple agents, facilitating the representation and examination of cooperative problem-solving scenarios. Typically, CGs comprise a vertex (agent) set and an edge set, with each state potentially possessing its distinct coordination graph\cite{guestrin2002coordinated}.
Deep CGs~\cite{bohmer2020deep} refine the joint value function of all agents per coordination graph, factoring payoffs between pairs of agents to strike a flexible balance between representational capacity and generalization. In ad hoc teamwork, Deep CGs have been employed by Rahman et al. (2021), albeit under full observability. The Generalized Policy Learning (GPL) approach~\cite{rahman2021towards} leverages Deep CGs in ad hoc teamwork to learn agent models and joint-action value models in the face of varying team compositions.
Moreover, the Partially Observable GPL (PO-GPL) method~\cite{rahman2022general} broadens the application of GPL by extending it from the fully observable problem domain to the partially observable one. 
In contrast to CGs and subsequent approaches, our graph-based games conceptualize the cooperative task from the standpoint of the normal-form game rather than the stochastic game. 

\section{Preliminaries}
\label{sec:preli}
\paragraph{Normal-Form Game.}
A two-player normal-form game is defined as a tuple $(N, \gA, \rvw)$, where $N=\{1,2\}$ is a set of two players, indexed by $i$, $\gA=\gA_1 \times \gA_2$ is the joint action space, and $\rvw=(w_1, w_2)$ with $w_i: \gA \rightarrow \mathbb{R}$ is a reward function for the player $i$. 
In a two-player common payoff game, two-player rewards are the same, meaning $w_1(a_1,a_2)=w_2(a_1,a_2)$ for $a_1,a_2 \in \gA$.

\paragraph{\changes{Open-Ended Meta-Games and Analysis.}}
\changes{
A meta-game is characterized by the tuple $(\gN, \gS, \gM)$, where $\gN$ represents the set of players, $\gS$ is a set of policies (like a set of RL models), and $\gM$ designates the meta-game payoff matrix. 
In common payoff cooperative meta-game with two players, the meta-game payoff is 
$
    \gM =\left\{\phi(S_1, S_2): (S_1, S_2)\in \gS^1\times \gS^2\right\},
$
where $\phi: \gS^1\times \gS^2 \rightarrow \mathbb{R}$ is the utility function. The meta-game payoff is a symmetric matrix.
The primary distinction between a meta-game and a normal-form game lies in their respective actions. In the case of a meta-game, a strategy such as a DRL model constitutes an action, as opposed to the atomic actions (e.g., up and down) typical of normal-form games.
Open-ended meta-game refers to the continual learning process, wherein new strategies are persistently generated and incorporated into the strategy sets throughout the training phase.}

\changes{
Empirical Game-Theoretic Analysis (EGTA) is the study of finding meta-strategies based on experience with prior strategies~\cite{Walsh2002Analyzing,Karl2018Generalised}. 
An empirical game is built by discovering strategies and meta-reasoning about exploring the strategy space~\cite{NIPS2017_3323fe11}.
Furthermore, empirical gamescapes (EGS) are defined as the convex hull of the payoff vectors of all strategies~\cite{psrorn}.
Given a population $\gN$ of $n$ strategies, the empirical gamescapes is often defined as 
\begin{equation}
\begin{aligned}
    \gG &= \{\text{convex mixture of rows of}~\gM \} \\
    &=\left\{
    \sum_i \alpha_i \cdot \vm_i: \bf{\alpha} \geq 1, \bf{\alpha}^T\bf{1}=1, \vm_i=\gM_{[i,:]}
    \right\}
\end{aligned}
\end{equation}
where $\gM$ is the empirical payoff matrix with the expected outcomes for each joint strategy.}

\paragraph{Cooperative Theoretic Concepts.} In the present study, our main focus is on characteristic function games, a particular class of cooperative games in which a coalition's generated value is represented by a characteristic function~\cite{chalkiadakis2011computational}. We consider a set of players $\gN = \{1, \dots, n\}$, where a coalition is denoted as a subset of players $\gN$, symbolized by $C \subseteq \gN$. The player set $\gN$ is also known as the grand coalition. A characteristic function game, denoted by $G$, consists of a pair $(N, v)$, where $N = \{1, \dots, n\}$ represents a finite, non-empty set of agents, and $v: 2^N \rightarrow \mathbb{R}$ is the characteristic function. This function assigns a real number $v(C)$ to each coalition $C \subseteq N$, with the number $v(C)$ typically considered as the coalition's value.

In addition, in this study, we examine transferable utility games (TU games), which are based on the underlying assumption that the coalitional value $v(C)$ can be divided among the members of $C$ in any way agreed upon by the members of the coalition. This assumption allows for greater flexibility in analyzing cooperative problem solving and the distribution of the resulting payoffs.

Shapley Value~\cite{shapley1971} is one of the important solution concepts for characteristic function games \cite{chalkiadakis2011computational,Bezalel2007intro}.
The Shapley Value aims to distribute fairly the collective value, like the rewards and cost of the team across individuals by each player's contribution.
Taking into account a coalition game $(\gN,v)$ with a strategy set $\gN$ and characteristic function $v$, the Shapley Value of a player $i\in \gN$ could be obtained by 

\begin{equation}
SV(i)=\frac{1}{n!}\sum_{\pi\in\Pi_\mathcal{N}} v(P_i^\pi \cup \{i\}) - v(P_i^\pi),
\label{eq:SV}
\end{equation}
where $\pi$ is one of the one-to-one permutation mappings from $\gN$ to itself in the permutation set $\Pi$ and $\pi(i)$ is the position of player $i \in \gN$ in permutation $\pi$. $P_i^\pi=\{j\in \gN | \pi(j)<\pi(i)\}$ is the set of all predecessors of $i$ in $\pi$.

\paragraph{Graph Theoretic Concepts.} 
In graph theory, a weighted directed graph or network can be represented as $D=(V,E,w)$, consisting of a set of vertices $V$, a set of directed edges $E$, and a weight function $w:E\rightarrow \mathbb{R}^+$ that assigns positive real numbers to each edge. If the graph has the property that $(v,u)\in E$ entails $(u,v)\in E$ and $w(v,u)=w(u,v)$ for all $(u,v)\in E$, it can be considered an undirected weighted graph, also known as a weight graph. An unweighted graph is a special case of a weighted graph, where the weight function assigns a value of $1$ to every edge $(u,v)\in E$.

Node Centrality is a graph theory concept that quantifies a node's relative importance or influence within a network. It is a measure of how central a node is to the overall network structure, with higher centrality values indicating greater importance. 
Degree Centrality is one of the simplest centrality concepts, based on the number of edges connected to a node~\cite{FREEMAN1978215}. 
In directed graphs, it can be further divided into in-degree (number of incoming edges) and out-degree (number of outgoing edges) Centrality.

PageRank~\cite{page1999pagerank} is a centrality measure used to rank web pages in search engine results by estimating the relative importance of each page in a hyperlink network. PageRank(WPG)~\cite{Xing2004Weighted} is an extension of the original PageRank algorithm that considers edge weights in addition to the basic structure of the network. This modification allows the algorithm to handle networks where the strength of connections between nodes varies, such as in citation networks or social networks where the influence of nodes may differ.
The formula of WPG is given as follows:
\begin{equation}
    \sigma(u)=(1-d) + d \sum_{v\in B(u)} \sigma(v) \frac{I_u}{\sum_{p\in R(v)} I_p} \frac{O_u}{\sum_{p\in R(v)}O_p}, 
    \label{eq:wpg}
\end{equation}   
where $d$ is the damping factor set to $0.85$, $B(u)$ is the set of nodes that point to $u$, $R(v)$ denotes the nodes to which $v$ is linked, and $I, O$ are the degrees of inward and outward of the node, respectively.

\section{Cooperative Open-Ended Learning}
\label{sec:cole}

\begin{figure*}[ht!]
\includegraphics[width=\linewidth]{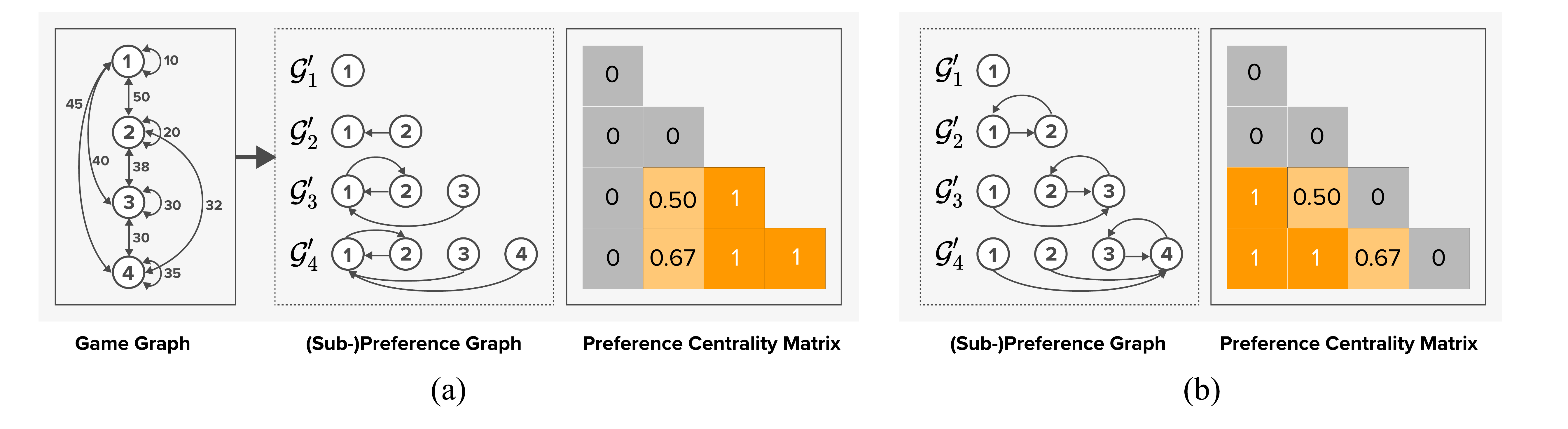}
\centering
\caption{ {The Game Graph, (sub-) preference graph and corresponding preference centrality matrix.}
The (sub-) preference graphs are for all four iterations in the training process, and the corresponding preference in-degree centrality matrix is based on them.
 As can be observed in $\gG^\prime_3$ and $\gG^\prime_4$ in \textit{(a)}, the newly updated strategies fail to be preferred by others and have centrality values of 1, despite an increase in the mean of rewards with all others. 
 In \textit{(b)}, we illustrate an ideal learning process in which a newly generated strategy can achieve higher outcomes than all previous strategies.
}
\label{fig:game_graph}
\end{figure*}

\subsection{Graphic-Form Games (GFGs)} 
It is important to evaluate cooperative incompatibility and identify those failed-to-collaborate strategies to conquer cooperative incompatibility.
Therefore, we propose graphic-form games (GFGs) to reformulate normal-form cooperative games from the perspective of game theory and graph theory, which is the natural development of empirical games~\cite{psrorn}.
The definition of GFG is given below.

\begin{definition}[Graphic-Form Game]
    Let a collection of strategies $\gN = \{1, 2, \cdots, n\}$ be given, where each strategy could be a parameterized network or a human player. A two-player graphic-form game (GFG) can be defined as a triplet $\gG = (\gN, \rmE, \rvw)$, which can be represented as a weighted directed graph. Here, $\gN$, $\rmE$, and $\rvw$ represent the sets of nodes, edges, and weights, respectively. For a given edge $(i, j)$, $\rvw(i, j)$ denotes the anticipated outcome when strategy $i$ competes against strategy $j$. The visual representation of a GFG is referred to as a game graph.
\end{definition}
The payoff matrix of $\gG$ is denoted as $\gM$, where $\gM(i,j)=\rvw(i,j), \forall i,j \in \gN$.
Our goal is to improve the upper bound of other strategies' outcomes in the cooperation within the population, which implies that the strategy should be preferred over other strategies.

Moreover, we propose preference graphic-form games (P-GFGs) as an efficient tool to analyze the current learning state, which can profile the degree of preference for each node in GFGs.
Specifically, P-GFG is a subgraph of GFG, where each node only retains the out-edge with maximum weight among all out-edges except for its self-loop.
Given a GFG $(\gN, \rmE, \rvw)$, the P-GFG could be defined as $\gG^\prime = \{\gN,\rmE^\prime, \rvw\}$, where $\rmE^\prime=\{(i,j) | \argmax_j \rvw(i, j), \forall j\in \{\gN\backslash i\}, \forall i \in \gN\}$ is the set of edges. The graphic representation of P-GFG is called a preference graph. 
\changes{
The game graph delineates the interaction of players within a weighted directed graph, in which the weight signifies the payoff or utility derived from two participating agents. 
However, the preference graph pertains to the depiction of each agent identifying the partner with whom they can achieve the most significant utility in the population. 
This is the underlying principle behind the term ``preference'': an agent demonstrates a strong tendency to cooperate with a particular endpoint agent to maximize their utility. }

To deeply investigate the learning process, we further introduce the \textit{sub-preference graphs} based on P-GFGs, which aim to reformulate previous learning states and analyze the learning behavior of the algorithm.
Suppose that there is a set of sequentially generated strategies $\gN_n=\{1,2,\cdots,n\}$, where the index also represents the number of iterations for simplicity.
For each previous iteration $i<n$, the sub-preference game form graph is denoted as $\{\gN_i, \rmE^\prime_i,\mb{w}_i\}$, where $\gN_i=\{1,2,\cdots,i\}$ is the set of strategies in iteration $i$, and $\rmE^\prime_i, and\ \mb{w}_i$ are the corresponding edges and weights.

The semantics of the preference graph is that a strategy or node $i$ prefers to play with the tailed node to achieve the highest results.
In other words, the more in-edges one node has, the more cooperative ability this node can achieve.
Ideally, if one strategy can adapt well to all others, all the other strategies in the preference graph will point to this strategy.
To evaluate the adaptive ability of each node, the centrality concept is introduced into the preference graph to evaluate how a node is preferred.
\begin{definition}[Preference Centrality]
    Given a P-GFG $\{\gN,E^\prime, \rvw\}$, preference centrality of $i\in \gN$ is defined as,
    $$
    \eta(i)=1- \operatorname{norm}(d_i),
    $$
    where $d_i$ is a graph centrality metric to evaluate how the node is preferred, and $\operatorname{norm}:=\mathbb{R}\rightarrow [0,1]$ is a normalization function.
\end{definition}

Note that the $d$ is a kind of centrality that could evaluate how much a node is preferred.
A typical example of $d$ is the centrality of degrees, which calculates how many edges point to the node. 
\changes{In this work, our primary choice for the graph centrality metric is in-degree centrality.}

\begin{figure}[t!] 
\centering 
\subfigure[\changes{The illustration showcases an instance of an FCP agent's collaboration failure with specified human participants. This figure is referenced from HSP~\cite{HSP}.}] {   
    \includegraphics[scale=0.6]{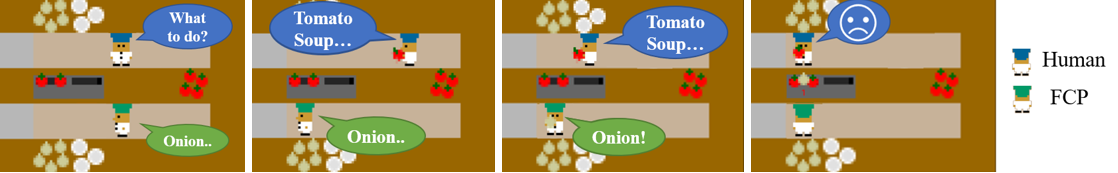}  
\label{fig:hsp_eg}
}  
\subfigure[\changes{Cooperative incompatibility issue in MEP training process.}] {   
    \includegraphics[scale=0.34]{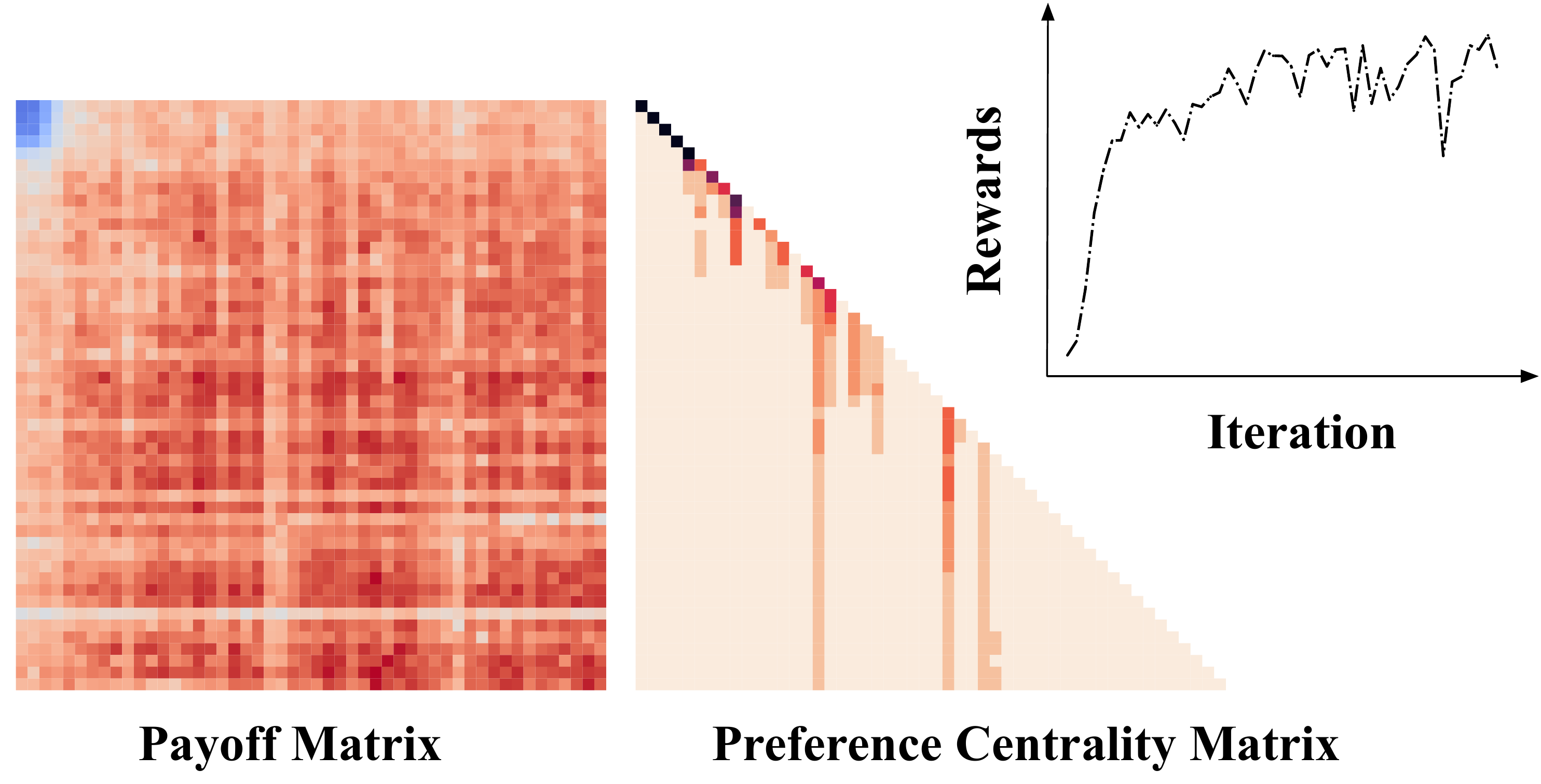}  
\label{fig:mep_eta}
}   
\caption{\changes{Motivating examples. Fig.~\ref{fig:hsp_eg} features an analysis conducted by HSP \cite{HSP}. The FCP agent converges to a fixed pattern of exclusively preparing onion soup, thereby failing to establish coordination with a human participant who prefers making tomato soup.
Fig.~\ref{fig:mep_eta} shows a training process of the MEP algorithm with several comparative incompatibility problems.
The payoff matrix of each strategy during training and the corresponding preference centrality matrix of the MEP algorithm in the Overcooked. 
A deeper shade of red in the payoff matrix signifies higher utility. 
The darker the color in the preference centrality matrix, the lower the centrality value, and the more other strategies prefer it.}
}
\label{fig:motivating_examples}     
\end{figure}

\changes{
Fig.~\ref{fig:game_graph} provides examples of the learning processes of two algorithms, one with and one without a cooperative incompatibility issue.
In Fig.~\ref{fig:game_graph}(a), illustrative of the algorithm possessing this cooperative incompatibility, the updated strategy is observed to fall short in elevating the cooperative utility achieved with other strategies after the second generation. There are no edges pointing towards the node which validates this further, and is confirmed by the preference centrality matrix, where the strategy at hand is marked with a $\eta=1$. This suggests that no nodes wish to collaborate with the updated strategies.
On the other hand, Fig.~\ref{fig:game_graph}(b), devoid of cooperative incompatibility, portrays a different scenario. In this depiction, every other strategy within the preference graph directs towards the most recent strategy, signifying the continuous enhancement of the cooperative capability.}

\subsection{\changes{Motivating Examples}}
\changes{
The challenge of training AI agents to effectively coordinate with unfamiliar AI counterparts or human players constitutes a substantial impediment. 
Predominantly trained through reward maximization methods, AI agents typically demonstrate deterministic strategies which often diverge from those employed by human players or other AI agents, thereby resulting in a coordination failure with previously unseen players~\cite{Deheng2020Towards,Yiming2023Towards,HSP}.
Fig.~\ref{fig:hsp_eg}, proposed by HSP~\cite{HSP}, illustrates a motivating example in the Overcooked game of how an agent player could collaborate with some players and fail to collaborate with other players.
In this example of the Overcooked game, chefs could accomplish two recipes: one calls for three onions and the other demands three tomatoes. The FCP algorithm, however, tends to settle into a distinctive pattern of cooking solely onion soup~\cite{HSP}. Consequently, this FCP agent can achieve high scores when paired with similarly styled human players. Nevertheless, its performance declines significantly when paired up with a player who prefers the tomato soup style, resulting in a disruption of the human player's tomato soup cooking plan, thereby failing to complete the required recipe.
Further instances of miscoordination can be observed in real-world scenarios, such as autonomous driving, where the question may arise whether the policy should be aggressive or conservative.
}

\changes{
Nevertheless, miscoordination issues exist in the training of some population-based ZSC algorithms.
An analysis of the MEP algorithm~\cite{MEP}, as depicted in Fig.~\ref{fig:mep_eta}, reveals a cooperative incompatibility evident during the learning process in the Overcooked environment~\cite{HARL}.
The figure on the left represents the payoff matrix, where a deeper shade of red signifies higher utility. In the accompanying preference indegree centrality matrix, a darker color is indicative of a strategy being more preferred by others. Throughout the MEP learning process, despite a consistent improvement in mean rewards (as depicted in the upper-right of Fig.~\ref{fig:mep_eta}), significant cooperative incompatibility problems arise post a certain training duration. 
At this juncture, many strategies show an inclination to engage with earlier strategies, represented by darker shades, over newer strategies, in an effort to secure higher rewards.
Therefore, addressing this collaboration incompatibility is crucial to enhance the adaptability of AI agents in coordinating with unseen partners.
}

\subsection{\changes{Cooperative Incompatibility}}
\changes{Fig.~\ref{fig:game_graph} provides examples of cooperative incompatibility, specifically concerning human-AI coordination and ZSC algorithm learning. 
Consequently, this section will delve deeper into a conceptual understanding of cooperative incompatibility and formally delineate this issue within the learning process, utilizing the tools previously introduced such as P-GFGs and preference centrality.}

\begin{definition}[\changes{Cooperative Incompatibility}] \changes{
Cooperative incompatibility pertains to the occurrence wherein a participant, either an AI agent or a human player, is unable to align with certain specific partners, who may equally be AI agents or human players.}
\end{definition}

\paragraph{\changes{Cooperative Incompatibility in the learning of ZSC algorithms.}} \changes{A cooperative incompatibility issue arises in a population-based learning algorithm when the new strategy $s_t$, produced by the algorithm at step $t$, is unable to boost cooperative utility in conjunction with existing strategies in the population. 
This is to say, the preference centrality $\eta(t)$ of $s_t$ is greater than 0. 
Ideally, an algorithm devoid of cooperative incompatibility issues will maintain a preference centrality of 0 at each step. This implies that the updated strategy enhances the collaborative utility of strategies within the population, prompting them to prefer cooperation with the updated strategy.
}

\changes{
In this study, our aim is to resolve the cooperative incompatibility in the learning process of population-based algorithms, ultimately paving the way to tackle cooperative incompatibility issues in AI-AI and AI-human coordination.
}

\begin{figure*}[t]
    \centering
\includegraphics[width=\linewidth]{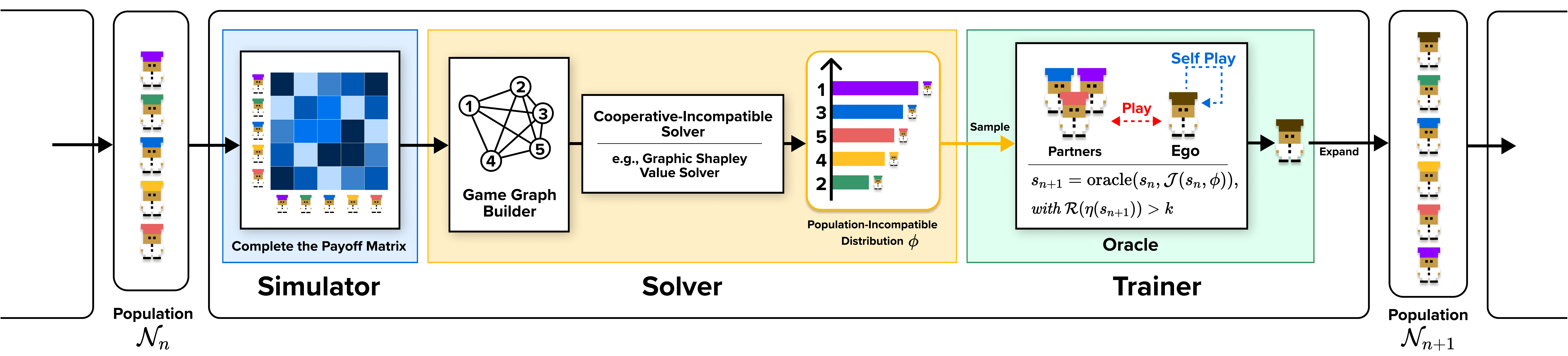}
    \caption{
    An overview of one generation in \framework: The solver derives the cooperative incompatible distribution $\phi$ using a cooperative incompatibility solver, which can be any algorithm that evaluates cooperative contribution. The trainer then approximates the relaxed best response by optimizing individual and cooperative compatible objectives. The oracle's training data is generated using partners selected based on the cooperative incompatibility distribution and the agent's strategy. Finally, the approximated strategy $s_{n+1}$ is added to the population, and the next generation begins.
    }
    \label{fig:cole}
\end{figure*}

\subsection{Cooperative Open-Ended Learning Framework}

\changes{To address the problem of cooperative incompatibility, we refine our comprehension of the objective by formulating empirical gamescapes~\cite{psrorn} from zero-sum games to our studied shared payoff GFGs. 
This provides us with a geometric representation of player strategies within a GFG $\{\gN, \rmE, \rvw\}$, thereby capturing the diversity and adaptive capacity of cooperative strategic behaviour.
However, it is not efficient to directly learn how to extend the EGS in common payoff games to cooperate effectively with unseen partners. }

To conquer cooperative incompatibility, the natural idea is to learn with the mixture of cooperative incompatible strategies on the most recent population $\gN$ to improve gamescape.
Given a population $\gN$, we present \textit{cooperative incompatible solver} to assess how strategies collaborate, especially with those strategies that are difficult to collaborate with.
The solver derives the cooperative incompatible distribution $\phi$, where strategies that do not coordinate with others have higher probabilities.
We also optimize the cooperative incompatible mixture over the individual objective, which is the cumulative self-play rewards to improve the adaptive ability with expert partners.
To simplify, we name it the individual and cooperative incompatible mixture (IPI mixture).
We use an approximate oracle to approach the best response over the IPI mixture.
Given strategy $s_n$, the oracle returns a new strategy $s_{n+1}$ :
$
    s_{n+1} = \operatorname{oracle}(s_{n+1}, \gJ(s_{n}, \phi)),
$
with $\eta(s_{n+1})=0$ , if possible. 
$\gJ$ is the objective function as follows,
\begin{equation}
    \label{eq:obj}
    \gJ(s_n,\phi) = \mathbb{E}_{p\sim \phi}\rvw(s_n,p) + \alpha \rvw(s_n,s_n).
\end{equation}
\changes{Here, $\alpha$ represents the balancing hyperparameter, while $\phi$ denotes the cooperative incompatibility distribution calculated by the solver , in which strategies that fail to coordinate with others are assigned higher probabilities.}
The objective consists of the cooperative compatible objective and the individual objective.
The cooperative compatible objective aims to train the best response to those failed-to-collaborate strategies, and the individual objective aims to improve the adaptive ability with expert partners.
We call the best response the best-preferred strategy if $\eta(s_{n+1})=0$. 

However, arriving at the best-preferred strategy with $\eta(s_{n+1})=0$ is hard or even impossible.
Therefore, we seek to approximate the best-preferred strategies by relaxing the best strategy to the strategy whose preference centrality ranks top $k$.
The approximate oracle could be rewritten as 
\begin{equation}
    s_{n+1} = \operatorname{oracle}(s_n, \gJ(s_n, \phi_n)),
\ with\ 
\mathcal{R}(\eta(s_{n+1}))>k,
\label{eq:oracle_approx}
\end{equation}
\changes{where $\mathcal{R}(\cdot)$ serves as the ranking function, wherein a lower preference centrality value corresponds to higher ranks. $\phi_n$ refers to the distribution at generation $n$. For simplicity, we may forego the use of the subscript $n$ in the remainder of the paper.
As illustrated by the equation, the terminal condition for a single generation of oracle training has been moderated to satisfy the criterion wherein the rank of preference centrality of optimized strategy resides within the top $k$. Instead of calculating best-preferred strategy, the approximate oracle aims to output best-k-preferred strategy to the IPI mixture.}

We extend the approximated oracle to open-ended learning and propose \framework (Fig.~\ref{fig:cole}).
The \framework iteratively updates new strategies that approximate the best-preferred strategies to the cooperative incompatible mixture and the individual objective.
The simulator completes the payoff matrix with the newly generated strategy and others in the population.
The solver aims at derive the cooperative incompatible distribution of the Game Graph builder and the cooperative incompatible solver.
The trainer uses the oracle to approximate the best-preferred strategy to the cooperative incompatible mixture and individual objective and outputs a newly generated strategy which is added to the population for the next generation.

Although we relax the best-preferred strategy to the strategy in the top $k$ centrality in the constraint, \framework still converges to a local best-preferred strategy with zero preference centrality.
Formally, the convergence theorem of the local best preferred strategy is given as follows.
 \begin{restatable}{theorem}{FirstTHM}
    \label{thm: converge}
\changes{Let $s_0\in \gS$ be the initial strategy and $s_i=\operatorname{oracle}(s_{i-1})$ for $i \in \mathbb{N}$.
Under the effective functioning of the approximated oracle as characterized by Eq.~\ref{eq:oracle_approx}, we can say that the sequence $\{s_i\}$ for ${i\in \mathbb{N}}$ could converge to a local optimal strategy $s^*$, i.e., the local best-preferred strategy.}
 \end{restatable}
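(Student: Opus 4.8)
The plan is to prove convergence through a monotone--boundedness argument and then to characterize the limit as a fixed point of the oracle. First I would observe that in a common-payoff cooperative game the utilities $\rvw(i,j)$ are uniformly bounded, so every quantity built from them — in particular the objective $\gJ(s,\phi)=\E_{p\sim\phi}\rvw(s,p)+\alpha\,\rvw(s,s)$ and the empirical gamescape $\gG$ (the convex hull of the payoff rows of $\gM$) — lives in a bounded region. This boundedness is the ingredient that, combined with monotonicity, will let me invoke the Monotone Convergence Theorem.

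The second step is to exhibit a scalar quantity that is monotonically non-decreasing along the oracle iterations. The natural candidate is the best attainable objective value of the IPI mixture: because the previous strategy $s_n$ and every earlier strategy remain in the population at generation $n+1$, the feasible set of the oracle only grows, so the best response $s_{n+1}$ cannot reduce the realized value. Geometrically this says the gamescape $\gG$ expands in set inclusion at each generation, in the spirit of the gamescape-expansion property of open-ended learning in zero-sum games~\cite{psrorn}, here transported to the shared-payoff setting. A bounded, nested, non-decreasing family of gamescapes — equivalently a bounded non-decreasing scalar sequence of objective values — must converge; I would then use compactness of the payoff space to extract a convergent subsequence $s_{n_j}\to s^*$ realizing the limiting value.

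The final and most delicate step is to argue that the limit $s^*$ is a local best-preferred strategy, i.e. $\eta(s^*)=0$, even though the approximate oracle of Eq.~\ref{eq:oracle_approx} only guarantees the relaxed top-$k$ rank condition $\mathcal{R}(\eta(s_{n+1}))>k$ at each finite step. I would proceed by contradiction: if $\eta(s^*)>0$ then, by the definition of preference centrality in terms of the in-degree $d_i$ on the preference graph, $s^*$ fails to attain maximal in-degree, so there remains an improving response that the oracle applied at $s^*$ could exploit to raise the objective strictly — contradicting that the objective sequence has already converged and that $s^*$ is a fixed point of the iteration.

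The hardest part will be making rigorous the passage from the discrete, combinatorial preference-centrality condition on a finite preference graph to the continuous limit. I expect the obstacle to be twofold: (i) reconciling the changing mixture $\phi_n$ across generations so that the comparison $\gJ(s_{n+1},\cdot)\ge\gJ(s_n,\cdot)$ is genuinely monotone rather than merely eventually monotone, and (ii) showing that the top-$k$ relaxation does not prevent the limit from attaining exactly zero centrality. I would address (ii) by noting that the finitely many possible in-degree values create a positive separation between centrality levels, so once the objective increments fall below this gap the newest strategy can only be consistent with attaining maximal in-degree locally, forcing $\eta=0$ in the limit.
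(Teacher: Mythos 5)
Your argument hinges on a monotone scalar---the best attainable IPI objective value---that the construction does not actually supply, and this is the load-bearing step you leave unresolved. The mixture $\phi_n$ is recomputed over the enlarged population at every generation, so $\gJ(\cdot,\phi_{n+1})$ and $\gJ(\cdot,\phi_n)$ are different functions, and nothing forces $\gJ(s_{n+1},\phi_{n+1})\geq \gJ(s_n,\phi_n)$; the set-inclusion expansion of the gamescape (adding a row to $\gM$ can only enlarge the convex hull) is true but yields no monotone scalar tied to the quantity the theorem is about. Worse, even granting convergence of objective values along a subsequence, your final contradiction conflates two distinct quantities: the oracle guarantee in Eq.~\ref{eq:oracle_approx} and the theorem's conclusion are stated in terms of the preference centrality $\eta$, a combinatorial property of the preference graph (how many strategies have $s^*$ as their best partner), not in terms of $\gJ$. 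From $\eta(s^*)>0$ you cannot extract an improving direction for $\gJ$ at $s^*$---a strategy can be stationary for the IPI objective while other strategies still prefer older partners---so a fixed point of the objective iteration need not have zero centrality. Finally, your separation argument for the top-$k$ relaxation fails in the open-ended setting: with $n$ strategies the in-degree centralities lie on the grid $\{0,\tfrac{1}{n-1},\dots,1\}$, and since $n$ grows without bound there is no uniform positive gap between centrality levels; the gap you invoke shrinks to zero exactly as fast as the quantity you are trying to control (indeed, this vanishing grid spacing is precisely what makes the rate only Q-sublinear in Corollary~\ref{lemma: converge_rate}).

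The paper's proof avoids all of this by never touching the objective and taking the centrality itself as the decreasing quantity. Under the oracle assumption, each new strategy $s_t$ lands in the group $g_t$ of the $k$ lowest-centrality strategies, so (Lemma~\ref{lemma:group}) the largest centrality $\eta_{g_t}$ within that group decreases at every generation: the $k$-th ranked member is displaced either by $s_t$ or by the former $(k-1)$-th member. Writing the decrement multiplicatively, $\eta_{g_t}=\beta_{t-1}\eta_{g_{t-1}}$ with $0\leq\beta_{t-1}<1$, and iterating gives $\eta_{g_t}=\prod_{i}\beta_i\cdot\eta_{g_0}\to 0$, which is the stated conclusion. (That step carries its own implicit assumption that the $\beta_i$ do not tend to $1$ too quickly, but it requires nothing about objective monotonicity.) To repair your route you would need, at minimum, a lemma connecting improvement of $\gJ(\cdot,\phi_n)$ to growth of in-degree on the preference graph; the paper sidesteps the need for any such bridge, which is exactly why its argument goes through where yours stalls.
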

 \begin{proof} 
 See Appendix~\ref{appendix:proofs_thm}.
\end{proof}
Furthermore, if we choose in-degree centrality as the preference centrality function, the convergence rate of \framework is Q-sublinear.
\begin{restatable}{corollary}{FirstLEMMA}
    \label{lemma: converge_rate}
Let $\eta: \gG^\prime \rightarrow \mathbb{R}^n$ be a function that maps a P-GFG to its in-degree centrality, the convergence rate of the sequence $\{s_i\}$ is Q-sublinear concerning $\eta$.
 \end{restatable}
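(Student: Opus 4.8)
The plan is to exhibit the error as a harmonic-scale quantity and read off the rate directly. By Theorem~\ref{thm: converge} the sequence $\{s_i\}$ converges to a local best-preferred strategy $s^*$ with $\eta(s^*)=0$, so the relevant error at step $n$ is $\eta(s_n)$ itself, and the corollary reduces to establishing
\[
\lim_{n\to\infty}\frac{\eta(s_{n+1})}{\eta(s_n)} = 1,
\]
which is exactly the defining property of Q-sublinear convergence.

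First I would unfold the in-degree instantiation of preference centrality. In the P-GFG on a population $\gN_n$ of size $n$, each node contributes precisely one out-edge (its most preferred partner, self-loop excluded), so the in-degrees sum to $n$ and the maximal attainable in-degree is $n-1$. Using the standard in-degree normalization $\operatorname{norm}(d)=d/(n-1)$, we obtain $\eta(s_n)=1-d_{s_n}/(n-1)$, where $d_{s_n}$ counts the strategies currently preferring $s_n$; the value $\eta=0$ corresponds exactly to $d_{s_n}=n-1$, i.e.\ every other node pointing to $s_n$. Next I would invoke the top-$k$ guarantee of the approximate oracle (Eq.~\ref{eq:oracle_approx}): since $s_n$ ranks within the top $k$ by preference centrality, only a number of strategies bounded by a constant $c=c(k)$ independent of $n$ can fail to prefer it, giving $d_{s_n}\ge n-1-c$ and hence $\eta(s_n)\le c/(n-1)$. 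Thus $\eta(s_n)=\Theta(1/n)$, consistent with $\eta(s_n)\to 0$.

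Finally I would form the ratio of consecutive errors. Writing $\eta(s_n)=a_n/(n-1)$ with $a_n\in[0,c]$ bounded by the top-$k$ constraint, the ratio factors as $\frac{\eta(s_{n+1})}{\eta(s_n)}=\frac{a_{n+1}}{a_n}\cdot\frac{n-1}{n}$, in which the dominant factor $\frac{n-1}{n}\to 1$ drives the harmonic decay. The main obstacle is precisely this middle factor: a merely bounded $a_n$ does not by itself force $a_{n+1}/a_n\to 1$, since an oscillating in-degree deficit could keep the ratio away from $1$. I would close this gap either by a monotonicity argument showing the normalized in-degree $d_{s_n}/(n-1)$ is eventually non-decreasing (so the deficit stabilizes), or by a squeeze trapping $\eta(s_n)$ between two $\Theta(1/n)$ sequences whose consecutive ratios both tend to $1$, yielding the limit $1$ and therefore Q-sublinear convergence. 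Handling the growing denominator $n-1$ honestly across iterations, rather than treating the graph as fixed, is the subtlety that makes the rate sublinear rather than linear.
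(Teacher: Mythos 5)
Your proposal follows essentially the same route as the paper's proof in Appendix~\ref{appendix:proofs_corollary}: instantiate $\eta(s_t)=1-I_t/(t-1)$ with $I_t$ the in-degree of the newest strategy, form the ratio of consecutive errors, factor out the harmonic term $(t-1)/t\to 1$, and reduce everything to the deficit-ratio limit $\lim_{t\to\infty}(t-I_{t+1})/(t-I_t-1)=1$. However, your middle step contains a genuine error: the top-$k$ guarantee of Eq.~\ref{eq:oracle_approx} does not yield $d_{s_n}\ge n-1-c(k)$. In a P-GFG each of the $n$ nodes emits exactly one out-edge, so the in-degrees sum to $n$; ranking in the top $k$ by preference centrality only says that at most $k-1$ nodes have larger in-degree, and says nothing about the magnitude of $d_{s_n}$ itself. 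If the preference graph is a single directed cycle, every node has in-degree $1$, every node is (after tie-breaking) in the top $k$, and the deficit is $n-2$, not $O(1)$. Hence the bound $\eta(s_n)\le c/(n-1)$ and the claim $\eta(s_n)=\Theta(1/n)$ follow from nothing in the paper and should be deleted; fortunately the factorization $\frac{\eta(s_{n+1})}{\eta(s_n)}=\frac{a_{n+1}}{a_n}\cdot\frac{n-1}{n}$, writing as you do $\eta(s_t)=a_t/(t-1)$ with $a_t=(t-1)-I_t$, never uses them.

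The obstacle you then flag --- that boundedness of $a_n$ does not force $a_{n+1}/a_n\to 1$, since an oscillating deficit keeps the ratio away from $1$ --- is exactly right, and it is worth knowing that the paper does not overcome it either: the penultimate equality of Eq.~\ref{eq:proof_1}, namely $\lim_{t\to\infty}\frac{t-I_{t+1}}{t-I_t-1}=1$, is asserted with no justification, which is precisely the statement you isolate as the missing piece. Neither of your proposed repairs closes it as sketched: monotonicity of the normalized in-degree $d_{s_t}/(t-1)$ is not guaranteed by the oracle (Theorem~\ref{thm: converge} and Lemma~\ref{lemma:group} control only the worst centrality within the top-$k$ group, not the in-degree of the newest strategy), and a squeeze between two $\Theta(1/t)$ sequences presupposes the very rate estimate that, per the cycle example above, you cannot derive from the top-$k$ constraint. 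Net verdict: same skeleton as the published proof, one concretely false auxiliary claim ($d_{s_n}\ge n-1-c$), and the decisive limit left open --- a gap your write-up at least names honestly, whereas the paper's proof passes over it by assertion.
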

\begin{proof}
See Appendix~\ref{appendix:proofs_corollary}.
\end{proof}

\section{Practical Algorithm}
\label{sec:cole_sv}
\changes{In an endeavour to tackle cooperative incompatibility issues in common-payoff games with two players, we have developed two practical algorithms, \algoR and \algo. 
These are based on the \framework, and are specifically designed to reconcile cooperative incompatibility and augment zero-shot coordination capabilities. 
As depicted in Fig.~\ref{fig:cole}, these algorithms, at each generation, accept an input population $\gN$ and from it, derive a local best-preferred strategy that is then appended to $\gN$ in order to extend the population. The process of generating this strategy involves the collaboration of the simulator, solver, and trainer modules.
While \algo and \algoR utilize a common simulator and trainer, they each employ a distinct solver. Computation of the payoff matrix $\gM$ for the input population $\gN$ is carried out by the simulator, with each element $\gM(i,j)$ where $i,j\in \gN$ symbolizing the cumulative rewards of players $i$ and $j$ at both respective starting positions. 
The solver's role involves assessing and identifying strategies that have failed to collaborate effectively, achieved through the calculation of an incompatible cooperative distribution.
A Graphic Shapley Value solver is integrated into \algo enabling it to measure the cooperative efficacy of each strategy relative to all others. This is achieved by implementing the weighted PageRank (WPG)~\cite{Xing2004Weighted} from graph theory into the Shapley Value. This technique allows for the evaluation of adaptability, particularly in relation to those strategies that have failed to collaborate effectively. As a contrast, \algoR incorporates a simplified solver which computes the cooperative ability by determining the payoff with other members of the population. Subsequently, the trainer approximates the strategy that is most preferred against the recently updated population.}

\begin{algorithm}[t!]
\caption{\changes{Practical Algorithms}}
\label{alg:cool}
\begin{algorithmic}[1]
\STATE \algorithmicrequire population $\gN_{0}$, the sample times $a,b$ of $\gJ_i,\gJ_c$, hyperparameters $\alpha,k$\changes{, solver flag $FLAG$}

\FOR{$t = 1,2,\cdots, $}
    \STATE {/* Step 1: Completing the payoff matrix */}
    \STATE $\gM_n \leftarrow \operatorname{Simulator}(\gN_t)$
    \STATE{/* Step 2: Solving the cooperative incompatibility distribution */}
    \changes{
    \IF{FLAG is ``SV''}
    \STATE{/* Selecting Graphic Shapley Value Solver */}
    \STATE $\phi = \operatorname{Graphic\ Shapley\ Value}(\gN_t)$ by Algorithm~\ref{algo:solver}
    \ELSE
    \IF{FLAG is ``R''}
    \STATE{/* Selecting Reward Solver */}
    \STATE $\phi = \operatorname{Reward\ Solver}(\gN_t)$
    \ENDIF
    \ENDIF
    }
    \STATE{/* Step 3: Approximate the best-preferred strategy */}
    \STATE $\gJ = \sum^b_{p\sim \phi}\phi(p)\rvw(s_t,p) + \alpha \sum^a\rvw(s_t,s_t)$, where $s_t=\gN_t(t)$, $\phi$ is updated each time by Eq~\ref{eq:SUCG}
    \STATE {$s_{t+1} = \operatorname{oracle}(s_t, \gJ)$} with $\mathcal{R}(\eta(s_{n+1}))>k$
    \STATE{/* Step 4: Expand the population */}
    \STATE $\gN_{t+1} = \gN_{t} \cup \{s_{t+1}\}$ 
    \ENDFOR
\end{algorithmic}
\end{algorithm}

\subsection{\changes{Solvers}}

\label{sec:solver}

\paragraph{\changes{Graphic Shapley Value Solver for \algo.}}
The graphic Shapley value solver is proposed to calculate the cooperative incompatible distribution as a mixture to approximate the best-preferred strategies in the recent population and overcome cooperative incompatibility.
Specifically, we combine the Shapley Value~\cite{shapley1971} solution, an efficient single solution concept for cooperative games to assign the obtained team value between individuals, with our GFG to evaluate and identify the strategies that did not cooperate.
To apply the Shapley Value, we define an additional characteristic function to evaluate the value of the coalition.
Formally, given a coalition $C\subseteq \gN$, we have the following:
  $  v(C) = \mathbb{E}_{i\sim C,j\sim C}\sigma(i)\sigma(j)\rvw(i,j),$
where $\sigma$ is a mapping function that evaluates how badly a node performs on its game graph.
We use the characteristic function to evaluate the coalition value of how it could cooperate with those hard-to-collaborate strategies.

We take the inverse of WPG~\cite{Xing2004Weighted} on the game graph as the metric $\sigma$.
WPG is proposed to assess the popularity of a node in a complex network.
The formula of WPG is given as follows:
\begin{equation}
    \hat\sigma(u)=(1-d) + d \sum_{v\in B(u)} \hat\sigma(v) \frac{I_u}{\sum_{p\in R(v)} I_p} \frac{O_u}{\sum_{p\in R(v)}O_p}, 
    \label{eq:wpg_sigma}
\end{equation}
where $d$ is the damping factor set to $0.85$, $B(u)$ is the set of nodes that point to $u$, $R(v)$ denotes the nodes to which $v$ is linked, and $I, O$ are the degrees of inward and outward of the node, respectively.
Therefore, the metric $\sigma$ evaluates how unpopular a node is and equals the inverse of the WPG value $\hat\sigma$.

Then we calculate the Shapley Value of each node by taking a characteristic function in equation~\ref{eq:SV}, named the graphic Shapley Value.
We use Monte Carlo permutation sampling~\cite{Castro2009PolynomialCO} to approximate the Shapley Value, which can reduce the computational complexity from exponential to linear time.
After inverting the probabilities of the graphic Shapley Value, we get the cooperative incompatible distribution $\phi$, where strategies that fail to collaborate with others have higher probabilities.
The details are given in Algorithm~\ref{algo:solver}.

\paragraph{\changes{Reward Solver for \algoR.}} 
\changes{
\algoR substitutes the Shapley value for WPG weighted average rewards within the population.
Specifically, for each player $i\in \gN$, we have
\begin{equation}
    \phi_i = \sum_{j\in \gN} \sigma(i)\sigma(j)\rvw(i,j),
\end{equation}
where $\rvw(i,j)$ is the weight in the game graph, i.e., average payoffs of player $i$ and $j$. $\sigma$ is calculated as same as the calculation of graphic Shapley value solver for \algo. 
The remainder of the reward solver processes are identical to the graphic Shapley value solver.
}

\begin{algorithm}[t!]
\caption{Graphic Shapley Value Solver Algorithm}
\label{algo:solver}
\begin{algorithmic}[1]
\STATE \algorithmicrequire: population $\gN$, the number of Monte Carlo permutation sampling $k$, the size of negative population

\STATE Initialize $\phi = \mb{0}_{|\gN|}$
\FOR{$(1,2,\cdots, k)$}
    \STATE $\pi \longleftarrow \textit{Uniformly sample from } \Pi_\gC$, where $\Pi_\gC$ is permutation set
    \FOR{$i\in \mc{N}$}
    \STATE /* Obtain predecessors of player $i$ in sampled permutation $\pi$ */
    \STATE $S_\pi(i) \longleftarrow \{j\in \mc{N} | \pi(j)<\pi(i)\}$
    \STATE /* Update incompatibility weights */
    \STATE $\phi_i\longleftarrow {\phi}_i + \frac{1}{k}({v(S_{\pi}(i)\cup \{i\})-v(S_{\pi}(i)))}$
    \ENDFOR
    \ENDFOR
\STATE $\phi \longleftarrow \phi/\sum\phi$
\STATE $\phi \longleftarrow (1-\phi)/\sum(1-\phi)$
\STATE \algorithmicensure: $\phi$
\end{algorithmic}
\end{algorithm}

\subsection{Trainer: Approximating Best-Preferred Strategy}

The trainer inputs the cooperative incompatible distribution $\phi$ and samples its teammates to learn to approach the best-preferred strategy against the IPI mixture.

Recall the oracle for $s_n$ : 
$
    s_{n+1} = \operatorname{oracle}(s_{n+1}, \gJ(s_{n}, \phi)),
$
with $\mathcal{R}(\eta(s_{n+1}))>k$.
\algo aims to optimize the best-preferred strategy over the IPI mixture. 
The $\gJ(s_{n}, \phi)$ is the joint objective that consists of individual and cooperative compatible objectives.
The individual objective aims to improve performance within itself and promote adaptive ability with expert partners, formulated as follows:
$
    \gJ_i(s_n) = \rvw(s_n,s_n),
$
where $s_n$ is the ego strategy that needs to be optimized in generation $n$.

And the cooperative compatible objective aims to improve cooperative outcomes with those failed-to-collaborate strategies:
$
    \gJ_{c} = \mathbb{E}_{p\sim \phi}\rvw(s_n,p),
$
where the objective is the expected rewards of $s_n$ with cooperative incompatible distribution-supported partners.
$\rvw$ estimates and records the mean cumulative rewards of multiple trajectories and starting positions.
\changes{The expectation can be approximated as follows:
$\gJ_{c} = \sum^b_1\phi(p^i)\rvw(s_t,p^i),
$
where $b$ is the number of sampling times.}

To balance exploitation and exploration as the learning continues, we present the Sampled Upper Confidence Bound for Game Graph (SUCG) which combines the Upper Confidence Bound (UCB) and GFG to control the sampling for more strategies with higher probabilities or new strategies.
Additionally, we view the SUCG value as the probability of sampling teammates instead of using the maximum item in typical UCB algorithms.
Specifically, in the game graph, we keep the information on the times that a node has been visited.
Therefore, the probability of each node considers both the Shapley Value and visiting times, denoted as $\hat{p}$.
The SUCG for any node $u$ in $\gN$ could be calculated as follows:
\begin{equation}
    \hat\phi(u) = \phi(u) + c\frac{\sqrt{\sum_{i\in \gN} \mb{N}(i)}}{1+\mb{N}(u)},
    \label{eq:SUCG}
\end{equation}
where $c$ is a hyperparameter that controls the degree of exploration and $\mb{N}(i)$ is the visit times of node $i$.
SUCG could efficiently prevent \algo from generating data with a few fixed strategies that did not cooperate, which could lead to loss of adaptive ability.

We conclude \algo as Algorithm~\ref{alg:cool}.
Furthermore, to verify the influence of different ratios of two objectives, we denote \algo with different ratios as 0:4, 1:3, 2:2, and 3:1.
Specifically, \algo with $a:b$ represents different partner sampling ratios for the combining objective, where $a$ is the corresponding times to generate data using self-play for the individual objective, and $b$ is the number of sampling times in $\gJ_c$.
For example, \algo 1:3 trains using self-play once and sampling from the cooperative incompatible distribution as partners three times to generate data and update the objectives.

\section{\changes{Human-AI Experiment Pipeline}}
\label{sec:cole_platform}
\begin{figure}[ht!]
    \centering
    \includegraphics[width=\linewidth]{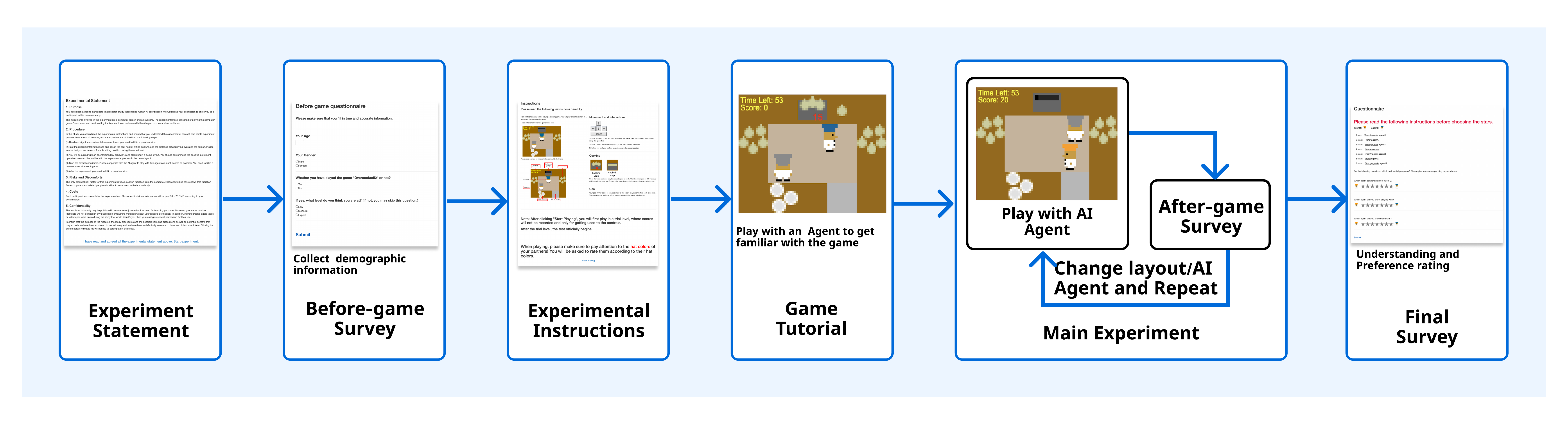}
    \caption{\changes{The illustrated figure characterizes the conceptual architecture of the proposed human-AI experimental pipeline, structured around six critical stages. \textbf{(Stage 1) Experiment Statement} delineates the nature of the experiment, associated risks, and ethical considerations among other relevant aspects. \textbf{(Stage 2) Before-game Survey} principally focuses on the acquisition of participant information. Delving deeper, \textbf{(Stage 3) Experimental Instructions} dispenses extensive procedural guidelines for the experiment. Subsequently, participants are invited to engage in a sequence of trial games to acquaint themselves with the experimental procedure in \textbf{(Stage 4) Game Tutorial}. Proceeding to \textbf{(Stage 5) Main Experiment}, it entails various rounds with a diverse array of differentiated AI agents. Post each round, participants are obliged to fill out a survey. The entire experimental process culminates with a comprehensive evaluation of the collaborative AI counterparts in \textbf{(Stage 6) Final Survey}.
    The pipeline is integrated into one platform designed for seamless interaction. Researchers have the flexibility to tailor the pipeline according to their needs, while participants benefit from a user-friendly interface that enables them to complete the stages with ease.}}
    \label{fig:pipeline}
\end{figure}

\changes{In this section, we present our Overcooked human-AI experiment pipeline, explicitly crafted for a comprehensive and streamlined assessment of the cooperative performance of AI agents, working in collaboration with novice human players. 
Fig.~\ref{fig:pipeline} illustrates the proposed experimental framework integrating human and AI participants, encompassing six crucial sequential stages.
The crux of this pipeline lies in its human-AI evaluation design, integrating a broad array of subjective metrics. These metrics scrutinize individual games involving AI agents, along with providing a comprehensive comparative analysis across the entire participant pool. These encompassing metrics primarily evaluate vital aspects such as intentionality, contribution, and teamwork within human-AI collaborations. Moreover, these metrics facilitate an extension of this evaluation to include parameters such as fluency, preference, and comprehension across all collaborating agents.
Besides, all six steps are integrated into one platform, named COLE-Platform, designed for seamless interaction. Researchers have the flexibility to tailor the pipeline according to their needs, while participants benefit from a user-friendly interface that enables them to complete the stages with ease.
}

\changes{
As shown in Fig.~\ref{fig:pipeline}, 
the initial stage entails the formulation of an experiment statement, where the ethical considerations and potential risks associated with the experiment are thoroughly assessed, alongside the information of pertinent experimental details.
Subsequently, participants are required to complete a preliminary survey, which seeks to gather fundamental information like their familiarity with the game and their age for more nuanced analyses. The next stage comprises detailed experimental instructions and a game tutorial, equipping participants with the necessary knowledge and skills to play the game and allowing them some initial exposure to the game for familiarization purposes.
Then, the process transitions into various game stages, encompassing a game tutorial and the main experiment, which comprises multiple rounds against a range of distinct AI agents. 
After each round, participants are required to complete a survey. The experiment is concluded with a final survey. Comprehensive data, encompassing player trajectories and AI network parameters, are meticulously recorded and stored in the respective database.
In Appendix~\ref{app:screenshot}, screenshots from different phases of the pipeline can be found (Figs.~\ref{fig:platform_state}-\ref{fig:platform_after}).
}
    
\changes{
The remainder of this section is devoted to introducing the Overcooked game, an overview of the Human-AI Experiment Platform, and a discussion on the design of the experimental scale.
}


\begin{figure}[t!] 
\centering 
\subfigure[{Forced Coord.}] {   
    \includegraphics[scale=0.3]{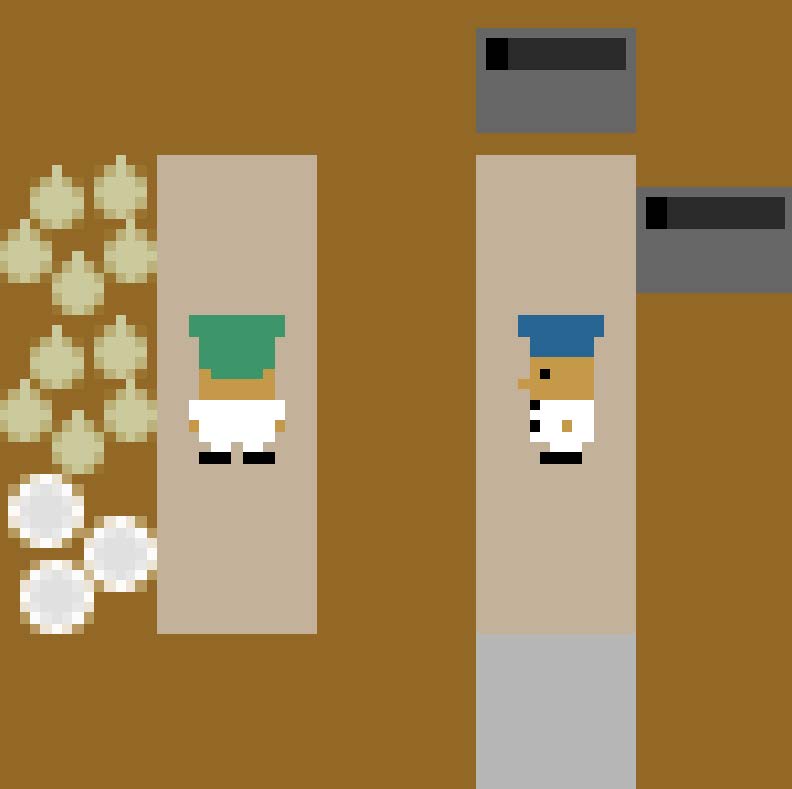}  
}   
\subfigure[{Counter Circ.}] {   
    \includegraphics[scale=0.3]{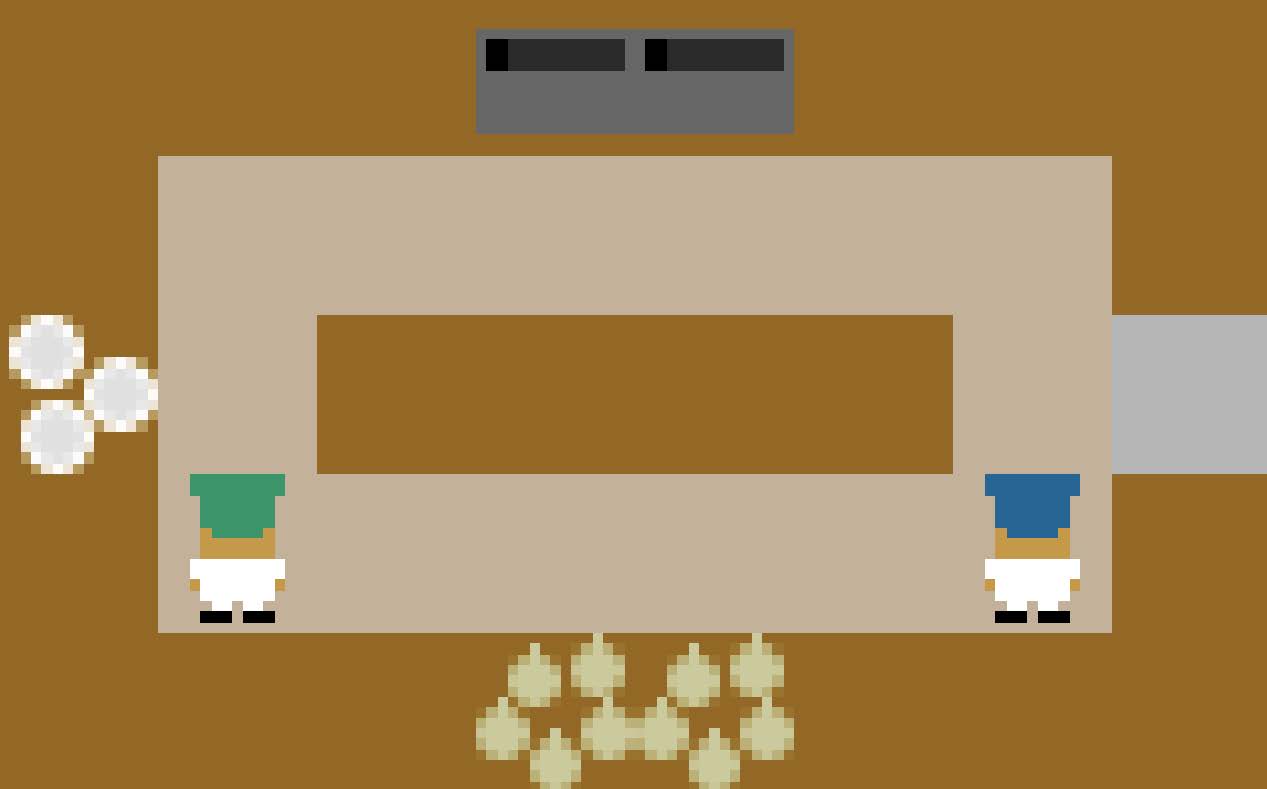}  
} 

\subfigure[{Asymm. Adv.}] {   
    \includegraphics[scale=0.4]{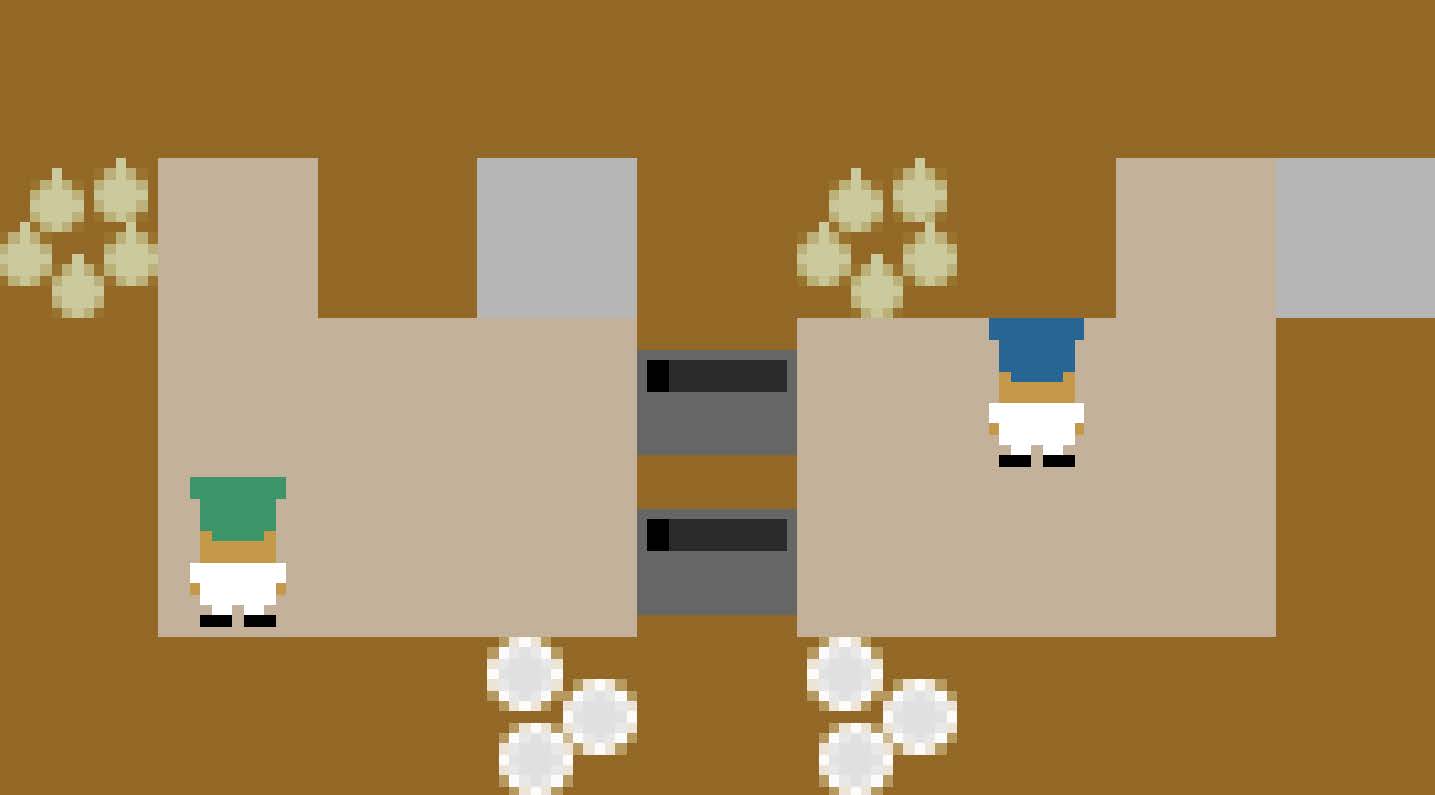}  
}    

\subfigure[{Cramped Rm.}] {\includegraphics[scale=0.34]{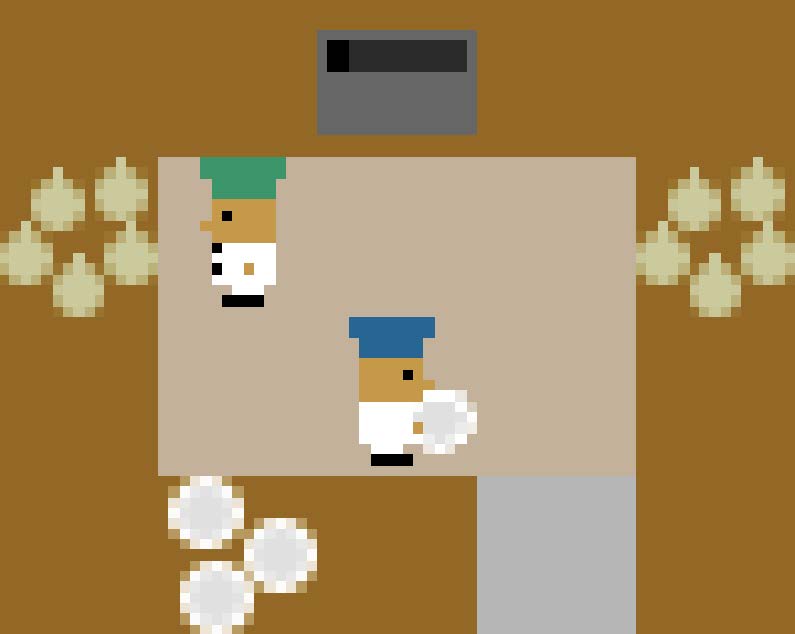}
}   
\subfigure[{Coord. Ring}] {   
    \includegraphics[scale=0.34]{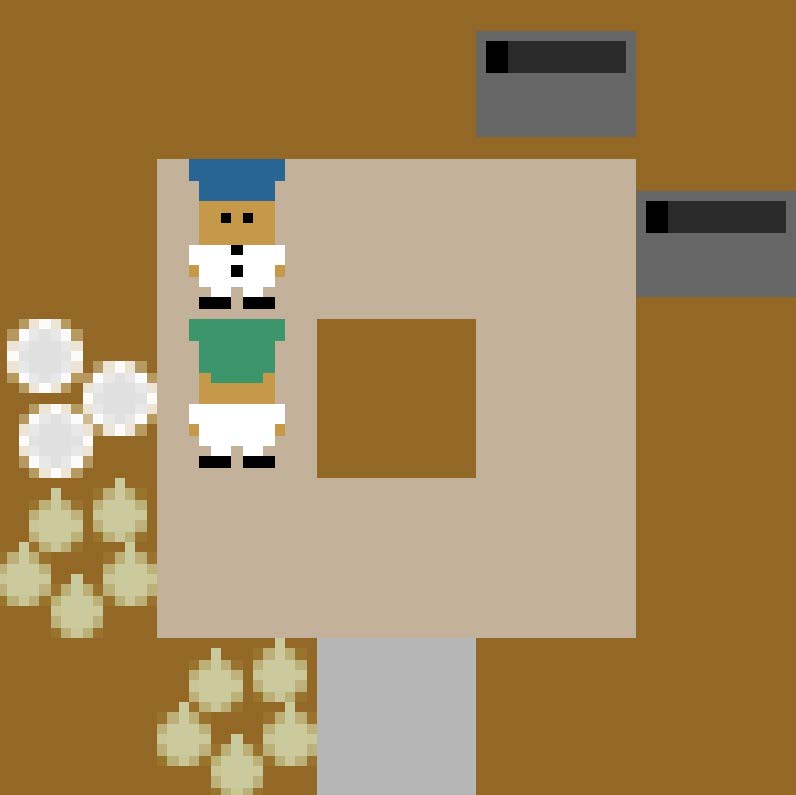}  
}   
\caption{Overcooked environment layouts. The Cramped Rm., Asymm. Adv., and Coord. Ring layouts are more conducive to higher rewards when players cooperate with different partners. On the other hand, the Forced Coord., Counter Circ., and Asymm. Adv. layouts offer distinct designs that serve as ideal testbeds to explore and foster cooperation between players.}     
\label{fig:overcooked-layouts}     
\end{figure}

\subsection{Overcooked Environment}
Our paper implements the platform in the Overcooked environment~\cite{HARL,charakorn2020investigating,knott2021evaluating}, a simulation environment for reinforcement learning derived from the Overcooked!2 video game~\cite{HARL}. The Overcooked environment features a two-player collaborative game structure with shared rewards, where each player assumes the role of a chef in a kitchen, working together to prepare and serve soup for a team reward of 20 points. The environment consists of five distinct layouts: Cramped Room (Cramped Rm.), Asymmetric Advantages (Asymm. Adv.), Coordination Ring (Coord. Ring), Forced Coordination (Forced Coord.), and Counter Circuit (Counter Circ.). Visual representations of these layouts can be found in Fig.~\ref{fig:overcooked-layouts}. The detailed introduction of five layouts is as follows.

\paragraph{Forced Coordination.} The Forced Coordination environment is designed to necessitate cooperation between the two players, as they are situated in separate, non-overlapping sections of the kitchen. Furthermore, the available equipment is distributed between these two areas, with ingredients and plates located in the left section and pots and the serving area in the right section. Consequently, the players must work together and coordinate their actions to complete a recipe and earn rewards successfully.

\paragraph{Counter Circuit.} 
The Counter Circuit layout features a ring-shaped kitchen with a central, elongated table and a circular path between the table and the operational area. In this configuration, pots, onions, plates, and serving spots are positioned in four distinct directions within the operational area. Although the layout does not explicitly require cooperation, players may find themselves obstructed by narrow aisles, prompting the need for coordination to maximize rewards. One example of an advanced technique players can learn is to place onions in the middle area for quick and efficient passing, thereby enhancing overall performance.

\paragraph{Asymmetric Advantages.} In the Asymmetric Advantages layout, players are divided into two separate areas, but each player can independently complete the cooking process in their respective areas without cooperation. However, the asymmetrical arrangement of the left and right sides encourages collaboration to achieve higher rewards. Specifically, two pots are placed in the central area, accessible to both players. The areas for serving and ingredients, however, are completely distinct. The serving pot is placed near the middle on the left side and far from the middle on the right side, with the ingredients area arranged oppositely. Players can minimize their walking time and improve overall efficiency by learning how to collaborate effectively.

\paragraph{Cramped Room.} The Cramped Room layout presents a simplistic environment in which two players share an open room with a single pot at the top and a serving area at the bottom right. In this setup, players could score high even without extensive coordination.

\paragraph{Coordination Ring.} The Coordination Ring layout is another ring-shaped kitchen, similar to the Counter Circuit. However, this layout is considerably smaller than Counter Circuit, with a close arrangement that makes it easier for players to complete soups. The ingredients, serving area, and plates are all in the bottom left corner, while the two pots are in the top right. As a result, this layout allows more easily achieving high rewards.

In summary, the Cramped Rm., Asymm. Adv., and {Coord. Ring} layouts are more conducive to achieving higher rewards when players cooperate with different partners. On the other hand, the Forced Coord., Counter Circ., and Asymm. Adv. layouts, due to their unique designs, provide more suitable testbeds for investigating and promoting cooperation among players.

\subsection{Details on Human-AI Experiment Platform}
We delve into the details of our Human-AI Experiment Platform, offering a comprehensive overview of its components and functionality. To better understand the system's user interface, layout, and functionality, a visual representation of the Human-AI Experiment Platform can be found in Appendix~\ref{app:screenshot}.

\begin{figure}[ht!]
    \centering
    \includegraphics[width=0.9\linewidth]{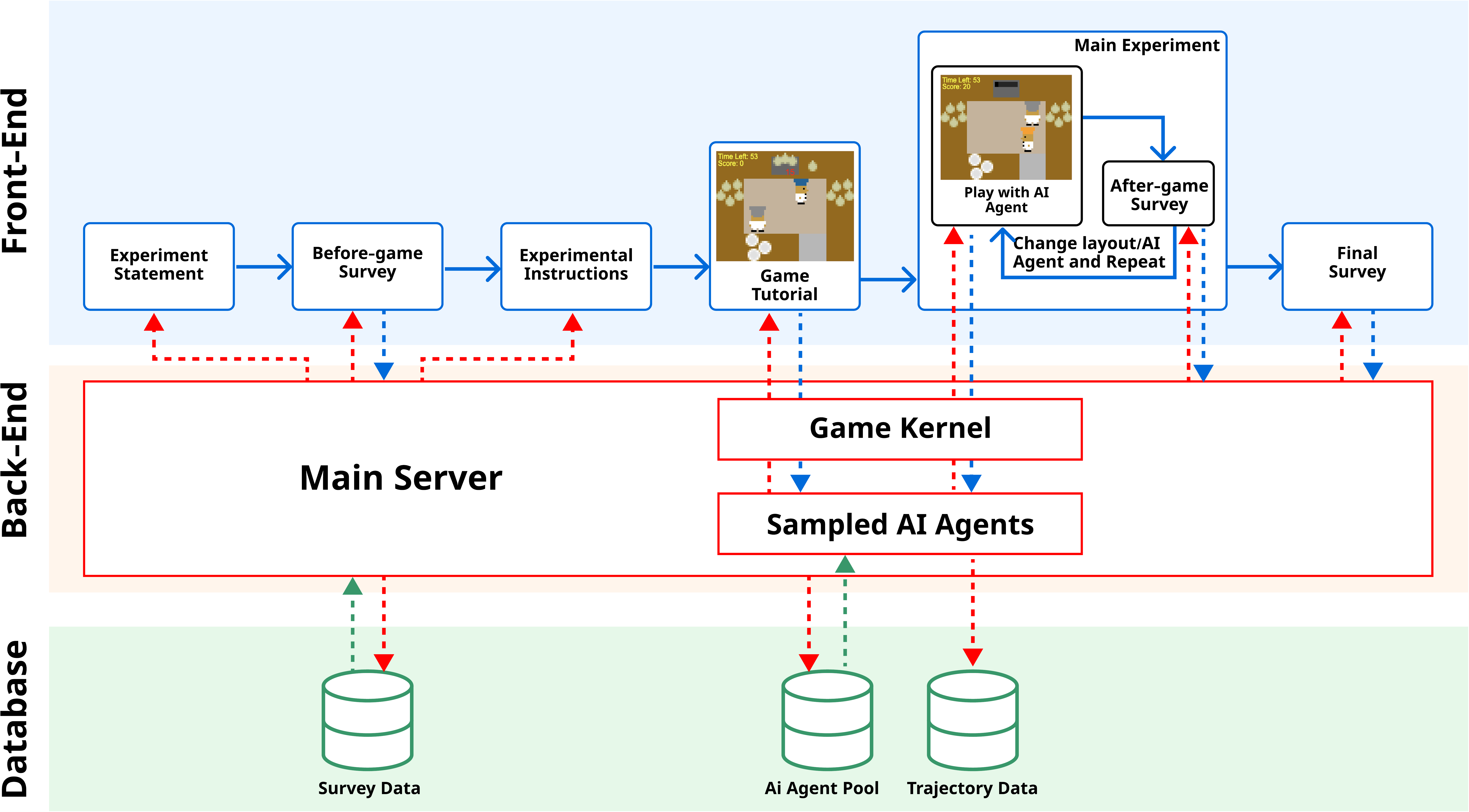}
    \caption{This figure outlines the structure of our human-AI experimental platform, including front-end, back-end, and database components. The front-end, where participants interact, starts with a pre-game process (experiment statement, before-game survey, and experimental instructions) and moves into game stages involving a game tutorial and main experiment with multiple rounds against varying AI agents. A survey follows each round, and a final survey concludes the experiment. All data, including player paths and AI weights, are stored in the database.}
    \label{fig:platform}
\end{figure}

\changes{
As illustrated in Fig.~\ref{fig:platform}, our human-AI experimental platform is composed of three primary elements: the front-end interface, the back-end server, and the database.
The front-end interface is tasked with both rendering the game and recording real-time human keyboard input. Subsequently, at each frame, it engages in communication with the back-end server, transmitting the current game state and keyboard input as serialized data. On receiving this data, the back-end server initially processes it into an observation for the AI agent. Following this, it alters the game environment in response to the actions taken by both the AI agent and the human player. This newly altered game state is then conveyed back to the front-end interface for the human player's perusal.
Simultaneously, the server captures the trajectories of both the human player and the AI agent during each game, storing this information for future analysis. Once the online experiments conclude, these captured trajectories are employed to conclude objective and subjective metrics.}

\paragraph{Pre-experiment Pages.} 
Before beginning the experiment, participants are presented with a page containing terms and conditions (e.g., experiment statement) and must decide if they agree to proceed. We provide the experimental statement used in our experiments as reported in Section~\ref{sec:exp}, which includes information on Experimental Purpose, Procedure, Risks and Discomforts, Costs, and Confidentiality.
If participants agree to the terms and conditions, they will be asked to complete a form with personal information, such as age, sex, level of skills in the game, etc. These data are used to facilitate a more in-depth analysis of the experiment results and will not be used for any purpose other than the experiment itself.

The instruction page then familiarizes participants with the mechanics of the game and the experiment process. Our example instruction page (Fig.~\ref{fig:platform_instruction}) offers details on game settings, world objects, and game controls through both text and images.
Following the instruction page, human players need to participate in a trial game to further practice how to play the game.
Both the instruction page and trial game contribute to participants gaining a preliminary understanding of the experiment.

\paragraph{Gaming and Questionnaire Pages.} 
The next component is the core of the Human-AI experiment, where participants play with randomly sampled AI teammates and complete questionnaires to evaluate the performance of these AI partners. 
Specifically, in our initial platform setup (more details are provided in Section~\ref{sec:exp_human_setting}), participants play with two different AI teammates on five Overcooked layouts, resulting in 10 games per participant. 
After each game, they are required to complete a questionnaire that assesses their cooperative gameplay with the AI partner.
The final questionnaire primarily focuses on coordination with the AI agent with whom they collaborated and the game they just played.
Upon completing all games, participants are directed to the final questionnaire page, where they are asked to provide preliminary feedback on AI agents based on their overall performance, considering factors such as fluency, legibility, and reliability.
In contrast to individual game questionnaires, the purpose of this feedback is to analyze and compare the capacities of different agents.

The Human-AI Experiment Platform is designed for seamless customization, enabling users to easily modify various aspects of the system, such as statements, instructions, questionnaires, and game settings. Most customizations can be achieved simply by editing configuration files, streamlined for users to adapt the platform to their specific needs.

\subsection{\changes{Experimental Scale}}
\label{sec:agent_eval}

\changes{In our effort to thoroughly examine the performance of human-AI coordination in a zero-shot scenario, we advocate for a systemically designed experimental scale. 
This scale incorporates an expansive range of subjective metrics that appraise individual games involving AI agents while providing a holistic comparison across multiple coordinated players.
The metrics utilized extend beyond the superficial, capturing elements such as intention, contribution, and team dynamics during human-AI interactions. Furthermore, the evaluation process encompasses additional aspects such as fluency, participant preferences, and comprehension across all collaborated agents.
More specifically, participants are requested to participate in after-game surveys and final-game surveys to assess the performance of AI agent collaboration. Each of these surveys comprises three questions that use a 7-point Likert scale, illustrated in Table~\ref{tab:after_game}. The selected survey questions for our experiment draw heavily from a pool of analogous questions found in \cite{hoffman2019evaluating}.
For the after-game surveys, the 7-point Likert scales are arranged with 1 denoting "strongly disagree" and 7 indicating "strongly agree". In the final experiment questionnaire, ratings from 1 to 7 reflect preferences from "strongly favoring the first agent" to "strongly favoring the second agent", for instance, a rating of 4 symbolizes an absence of preference. The rating system adopts a star-assigning format, with the rules clearly stated at the onset of each questionnaire.
}
\begin{table}[ht]
    \centering
        \caption{Evaluation statements for the after-game and final questionnaires consist of three statements each. After-game statements aim to assess the cooperative experience with the AI agent in a single game, while final statements require participants to compare the two AI partners and rate their performance after all games have been completed.}
    \label{tab:after_game}
    \begin{tabular}{c|c|c}
    \hline
           \textbf{Type}&\textbf{Index}& \textbf{Scale statement} \\
         \hline
       \multirow{3}{*}{\textbf{After-game}} & Q1  &  The agent and I have good teamwork.\\
       &Q2  &  The agent is contributing to the success of the team. \\
       &Q3  &  I understand the agent's intentions.\\
       \hline
       \multirow{3}{*}{\textbf{Final}} & Q1  &  Which agent cooperates more fluently?\\
       &Q2  &  Which agent did you prefer playing with? \\
       &Q3  &  Which agent did you understand with?\\
       \hline
    \end{tabular}
\end{table}

\section{Experiments}
\label{sec:exp}
We carry out a series of experiments involving AI agents, human proxies, and human players to assess the performance of \algo compared to the baseline methods when collaborating with partners in zero-shot settings. Our experiments focus on the following research questions (RQ):

\begin{itemize}
    \item RQ1: What is the optimal balance between individual and socially compatible objectives?  (Section \ref{exp:ratio})
    \item RQ2: How do the \algo and baseline agents perform when cooperating with agents of varying skill levels in a zero-shot setting? (Section \ref{exp:agent})
    \item RQ3: Does \algo effectively address the issue of cooperative incompatibility? (Section \ref{casestudy})
    \item \changes{RQ4: How do each modules within \algo contribute to its overall performance? (Section \ref{exp:ablation})}
    \item \changes{RQ5: Which solver exhibits superior performance? (Section \ref{exp:solver})}
    \item RQ6: How do the \algo and baseline agents perform when working with humans in a zero-shot setting, and which algorithm do human users prefer? (Section \ref{exp:human})
\end{itemize}


\subsection{Evaluation of Combining Objectives' Effectiveness}
\label{exp:ratio}
\begin{figure}[t!]
    \centering
    \includegraphics[width=0.9\linewidth]{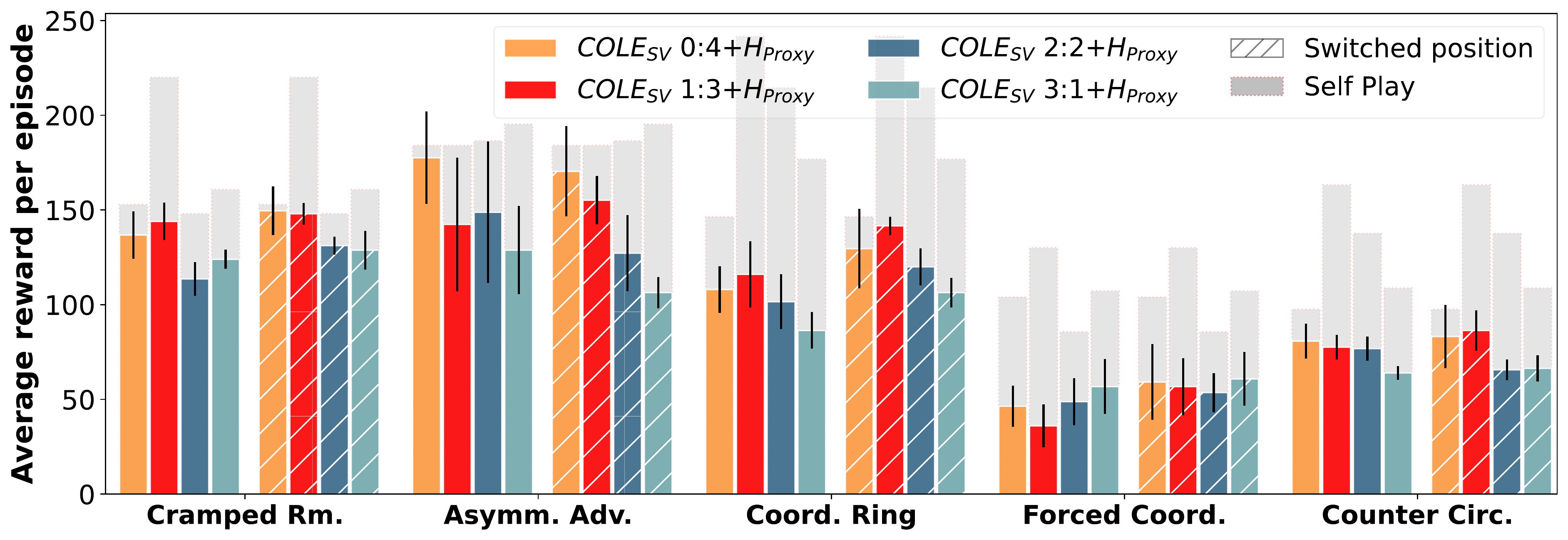}
    \caption{{The result of the combining objectives' effectiveness evaluation.}
    Mean episode rewards over 400 timesteps trajectories for \algo s with different objective ratios 0:4, 1:3, 2:2, and 3:1, paired with the unseen human proxy partner $H_{proxy}$.
    \changes{The ratios 0:4, 1:3, 2:2, and 3:1 denote varying proportions between individual and cooperative compatible objectives.}
    The gray bars behind present the rewards of self-play.
    }
    \label{fig:exp_ablation}
\end{figure}

We construct evaluations with different ratios between individual and cooperative compatible objectives, such as 0:4, 1:3, 2:2, and 3:1. 
These studies demonstrate the effectiveness of optimizing both individual and cooperative incompatible goals. 

\subsubsection{Experimental Setting}
We divided each training batch into four parts, the ratio indicating the proportion of data generated by self-play and data generated by playing with strategies from the cooperative incompatible distribution. 
We omitted the 4:0 ratio as it would result in the framework degenerating into self-play.

\subsubsection{Results}
Fig.~\ref{fig:exp_ablation} shows the mean rewards of episodes over 400 time steps of gameplay when paired with the unseen human proxy partner $H_{proxy}$ \cite{HARL}. 
We found that \algo with ratios 0:4 and 1:3 achieved better performance than the other ratios. 
In particular, \algo, with a ratio of 1:3, outperformed the other methods in the Cramped Room, Coordination Ring, and Counter Circuit layouts. 
On the Forced Coordination layout, which is particularly challenging for cooperation due to the separated regions, all four ratios performed similarly on average across different starting positions.  
Interestingly, \algo with only the cooperative compatible objective (ratio 0:4) performed better on the Asymmetric Advantages and Forced Coordination layouts when paired with the human proxy partner.
Effectiveness evaluations indicate that the combination of individual and cooperatively compatible objectives is crucial to improving performance with unseen partners.
In general, we choose the ratio of 1:3 as the best choice.

We further visualize the trajectories produced by \algo 1:3 and 0:4 with human proxy and expert partners in Overcooked on our demo page.
Fig.~\ref{fig:case_study} presents three screenshots of the \algo 0:4 model (blue player) that collaborates with one of the expert partners, the PBT model (green player). 
The case illustrates the importance of the individual objects in zero-shot coordination with expert partners. 
Frame A is a screenshot taken at 53s when the two players start to impede each other. 
The PBT model has taken the plate and wants to load and serve the dish. 
The blue player wants to take the plate but does not know how to change the objective to allow the green player to load the dish. 
After blocking for about 11s, the blue player starts to move and lets the green player go to the pots (Frame B). 
However, the process is not smooth and takes 7s to reach Frame C. 
This phenomenon does not occur in \algo 1:3 coordination with expert partners, which shows that including individual objectives might improve the cooperative ability with expert partners.

\begin{figure}[h]
    \centering
\includegraphics[width=0.95\linewidth]{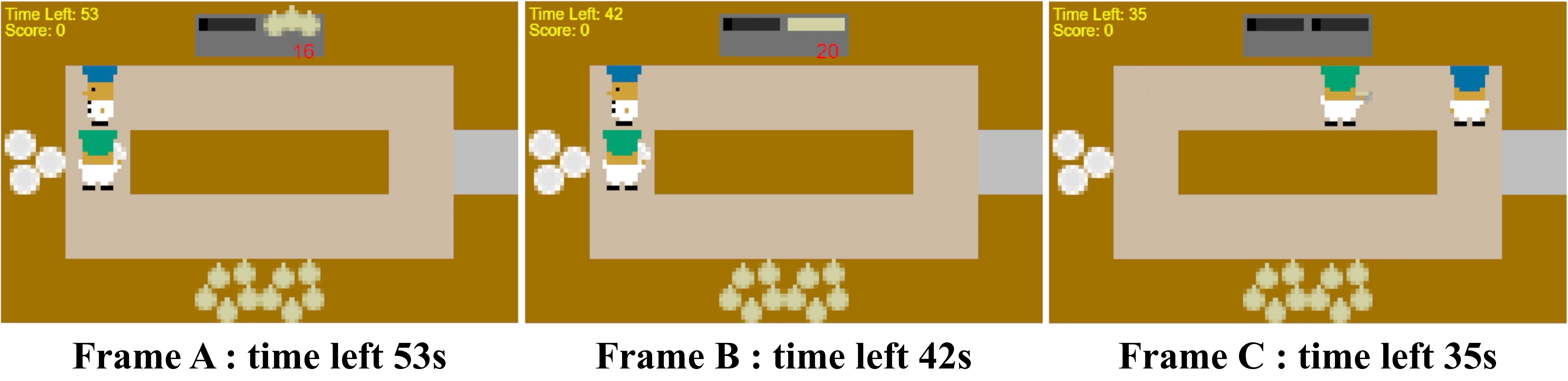}
\caption{
{Trajectory snapshots of the \algo 0:4 model (blue) with one of the expert partners - PBT model (green).}
}
    \label{fig:case_study}
\end{figure}

\subsection{Evaluation with Human Proxy and AI Agents}
\label{exp:agent}

To thoroughly assess the ZSC ability, we evaluated the algorithms with unseen human proxy and expert partners. 
We compare our method with other methods, including self-play~\cite{SP,HARL}, PBT~\cite{PBT,HARL}, FCP~\cite{FCP}, and MEP~\cite{MEP}, all of which use PPO~\cite{PPO} as the RL algorithm. 
We use the human proxy model $H_{proxy}$ proposed in~\cite{HARL} as human proxy partners and the models trained with baselines and \algo as expert partners. 

\subsubsection{Experimental Setting}
\label{subsec:settings}

We adopted two sets of evaluation protocols for the evaluation. 
The first protocol involves playing with a trained human model $H_{proxy}$ trained in behavior cloning.
Due to the quality and quantity of human data used for behavior cloning to train the human model is limited, the capabilities of the human proxy models are limited. 
Therefore, we use an additional evaluation protocol to coordinate with unseen expert partners. 
We selected the best models of our reproduced baselines and \algo 0:4 and 1:3 as expert partners.
The mean of the rewards is recorded as the performance of each method in collaborating with expert teammates. 
Appendix~\ref{appendix:cole} and Appendix~\ref{appendix:base} give details of the implementation of \algo and baselines.

\subsubsection{Results with Human Proxy and AI Agents}
\label{different_levels}
\begin{figure}[t!]
    \centering
    \includegraphics[width=0.95\linewidth]{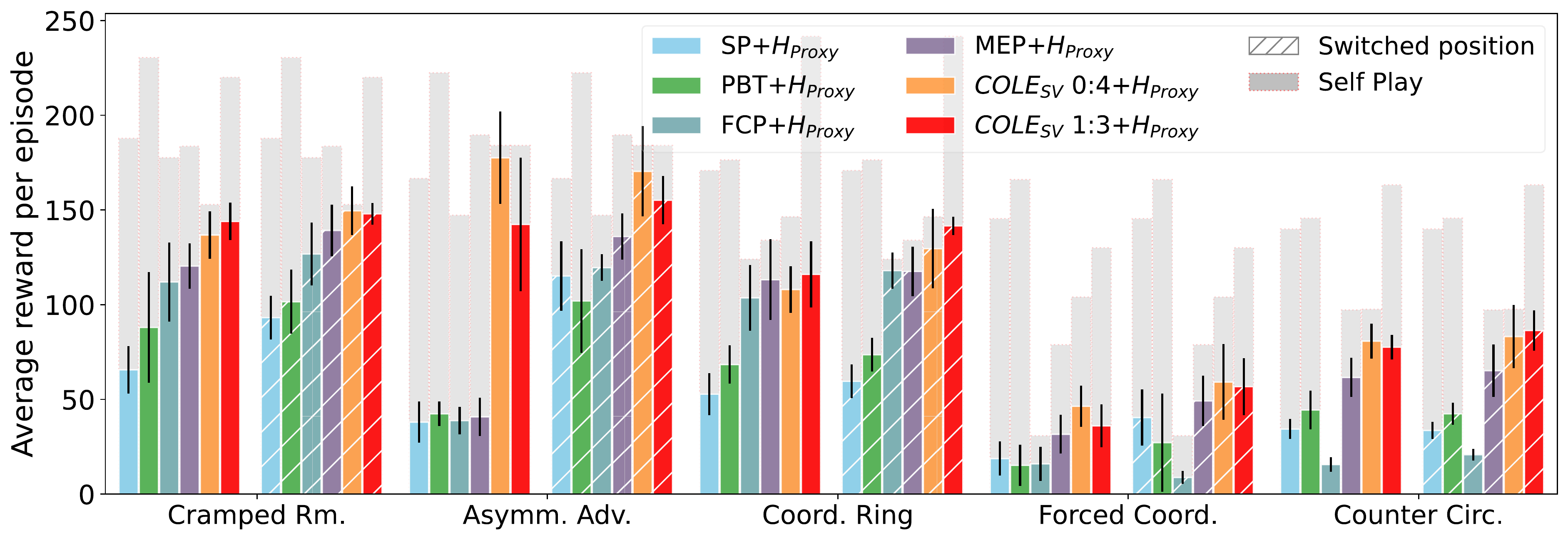}
    \caption{{Performance with the human proxy partner.}
    The performance of \algo with human proxy partners is presented in terms of mean episode rewards over 400 timesteps trajectories for different objective ratios of 0:4 and 1:3, when paired with the unseen human proxy partner $H_{proxy}$. 
    \changes{The ratios 0:4 and 1:3 denote varying proportions between individual and cooperative compatible objectives.}
    The results include the mean and standard error over five different random seeds. The gray bars indicate the rewards obtained when playing with themselves; the hashed bars indicate the performance when starting positions are switched.
    }
    \label{fig:main_exp}
\end{figure}

Fig.~\ref{fig:main_exp} presents the performance of SP, PBT, MEP, and \algo with 0:4 and 1:3 when cooperating with human proxy partners. 
We observed that different starting positions on the left and right in asymmetric layouts resulted in significant performance differences for the baselines. 
For example, in the Asymmetric Advantages, the cumulative rewards of all baselines in the left position were nearly one-third of those in the right position. 
On the contrary, \algo performed well at the left and right positions.

As shown in Fig.~\ref{fig:main_exp}, \algo outperforms other methods in all five layouts when paired with the human proxy model. 
Interestingly, \algo 0:4 with only the cooperatively compatible objective achieves better performance than \algo 1:3 on some layouts, such as Asymmetric Advantages. 
However, the self-play rewards of \algo 0:4 are much lower than \algo 1:3 and even other baselines. 
The objective function of \algo 0:4 consists of only cooperative compatible term $\mathbb{E}_{p\sim \phi}\rvw(s_n,p)$.
We believe that focusing solely on cooperative compatible objectives may lead to learning stagnation when collaborating with low-performing partners sampled from distribution $\phi$. 
Consequently, the self-play rewards may be limited.
Furthermore, the performance with unseen experts of \algo 0:4 as shown in Table~\ref{tab:exp_expert}, is sometimes lower than the baselines.

\begin{table}[t!]
\caption{Performance with expert partners. Mean episode rewards over 1 min trajectories for baselines and \algo with ratio 0:4, 1:3.
    \changes{The ratios 0:4 and 1:3 denote varying proportions between individual and cooperative compatible objectives.}
  Each column represents a different expert group, in which the result is the mean reward for each model playing with all others.
  }
\label{tab:exp_expert}
\begin{center}
\resizebox{0.8\linewidth}{!}{%
\begin{sc}
    \begin{tabular}{lcccccc}
\toprule
\multirow{2}{*}{\textbf{Layout}} &\multirow{2}{*}{\textbf{Ratio}} & \multicolumn{4}{c}{\textbf{Baselines}}  &\multirow{2}{*}{\textbf{COLEs}}\\
\cline{3-6}\
 && \textbf{SP} & \textbf{PBT} & \textbf{FCP} & \textbf{MEP} &  \\
\midrule
\multirow{2}{*}{\textbf{Cramped Rm.}} &0:4&
153.00 & 198.50  & {199.83 } & 178.83 & 169.76 \\&1:3& 
165.67 & 209.83 & 207.17 & 196.83 & \textbf{212.80}\\
\hline
\multirow{2}{*}{\textbf{Asymm.Adv.}} &0:4&
108.17  & 164.83 & 175.50 & 179.83& \textbf{182.80}
\\&1:3&
108.17 & 161.50 & 172.17 & {179.83} & 178.80\\
 \hline
\multirow{2}{*}{\textbf{Coord. Ring}}&0:4&
132.00 & 106.83 & {142.67} & 130.67  & 118.08
\\&1:3&
133.33 & 158.83 & 144.00  & 124.67 &\textbf{166.32}\\
 \hline
\multirow{2}{*}{\textbf{Forced Coord.}} &0:4&
~~58.33 & ~~61.33 & ~~50.50  & ~~{79.33} &  ~~46.40\\&1:3&
~~61.50  & ~~70.33 & ~~62.33  & ~~38.00  &~~\textbf{86.40}\\
 \hline
\multirow{2}{*}{\textbf{Counter Circ.}}&0:4&
~~44.17  & ~~48.33 & ~~60.33& ~~21.33 & ~~{90.72}\\
&1:3&
~~65.67  & ~~64.00  & ~~46.50  & ~~76.67  &  \textbf{105.84}
\\
\bottomrule
\end{tabular}
\end{sc}
}
\end{center}
\end{table}
Table~\ref{tab:exp_expert} presents the outcomes of each method when cooperating with expert partners. 
Each column in the table represents different expert groups, including four baselines and one \algo with a ratio of 0:4 or 1:3. 
The last column, labeled ``COLEs'', represents the mean rewards of the corresponding \algo when working with other baselines. 
The table displays the mean cumulative rewards of each method when working with all other models in the expert group. 
The results indicate that \algo 1:3 outperforms the baselines and \algo 0:4, except in the layout of Asymmetric Advantages.
In the Asymmetric Advantages, \algo 0:4 only achieved a four-point victory over \algo 1:3, which can be considered insignificant considering the margin of error. 
In the other four layouts, the rewards obtained by \algo 1:3 while working with expert partners are significantly higher than those of \algo 4:0 and the baselines.

\changes{
Our findings indicate that \algo 1:3 exhibits superior adaptive capacity when dealing with partners of expert levels, emphasising individual objectives is key to achieving zero-shot coordination with expert partners. 
To summarize, \algo 1:3 manifests enhanced robustness and versatility in real-world environments, making it apt for collaboration with partners across a spectrum of expertise levels.
Hence, in the forthcoming experiments and human-AI interaction studies, we will implement \algo 1:3 as our definitive agent, tailored to adapt to human players of diverse skill levels.}

\subsection{Effectively Conquer Cooperative Incompatibility}
\label{casestudy} 
\begin{figure}[t!]
\centering    \includegraphics[width=0.7\linewidth]{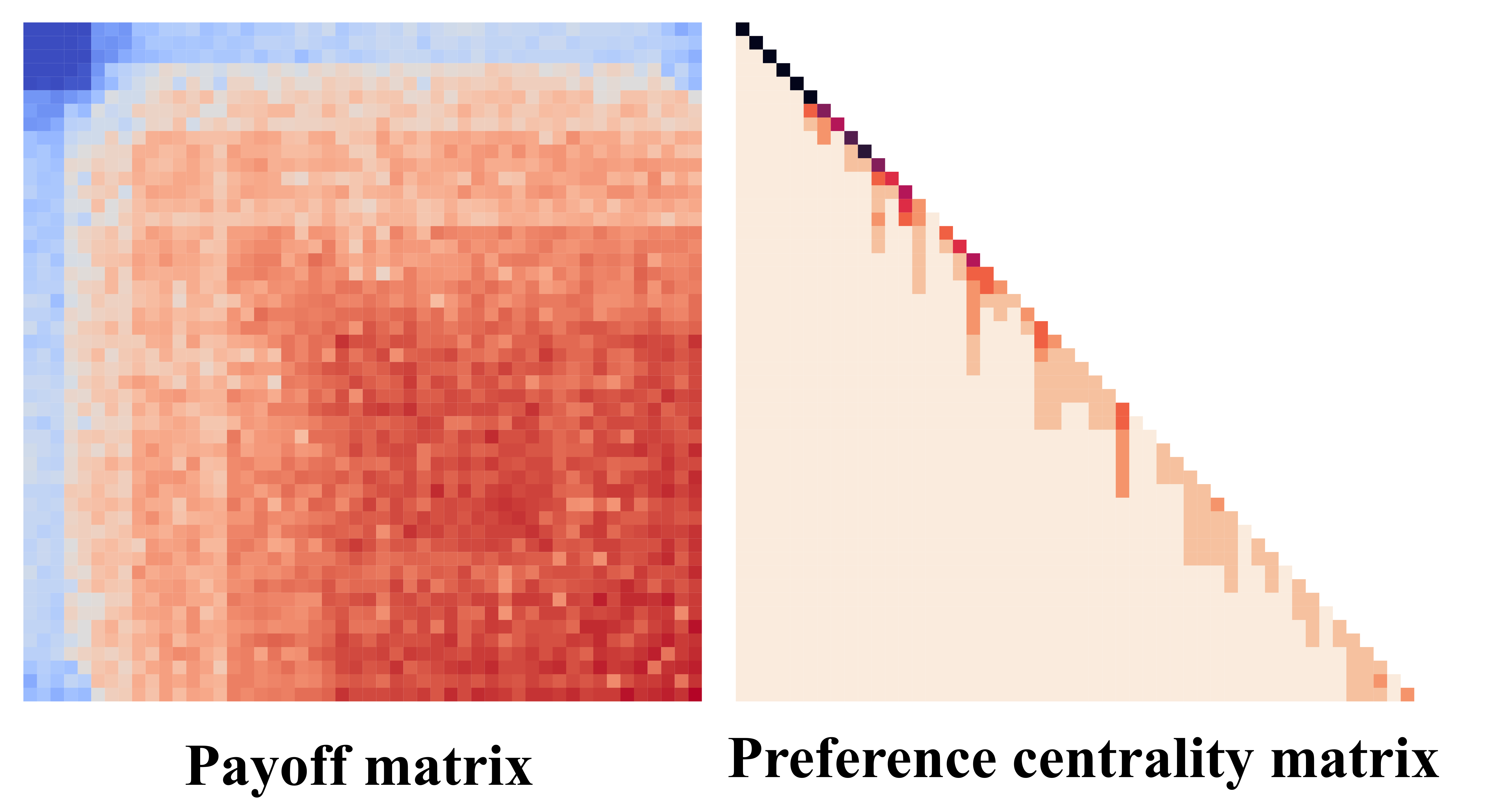}
    \caption{
    {The learning process analysis of \algo 1:3.}
A deeper shade of red in the payoff matrix signifies  higher utility, while the darker-colored element on the right represents lower centrality. Clustering of darker-colored areas around the diagonal on the right indicates that the new strategy adopted in each generation is preferred by most strategies, thus overcoming the cooperative incompatibility.
}
\label{fig:cole_analysis}
\end{figure}
To We analyze the learning process of \algo, which shows that our method overcomes cooperative incompatibility. 

In our analysis of the learning process of \algo 1:3 in the Overcooked environment, as shown in Fig.~\ref{fig:cole_analysis}, we observe that the method effectively overcomes the problem of cooperative incompatibility. 
The figure on the left in Fig.~\ref{fig:cole_analysis} shows the payoff matrix of 50 uniformly sampled checkpoints during training, with the upper left corner representing the starting point of training. 
Darker red elements in the payoff matrix indicate higher rewards. 
The figure on the right displays the centrality matrix of preferences, which is calculated by analyzing the learning process. 
Unlike the payoff matrix, the darker elements in the centrality matrix indicate lower values, indicating that more strategies prefer them in the population. 
As shown in the figure, the darker areas cluster around the diagonal of the preference centrality matrix, indicating that most of the others prefer the updated strategy of each generation. 
Thus, we can conclude that our proposed \algo effectively overcomes the problem of cooperative incompatibility.

\begin{figure}[ht!]
    \centering
    \includegraphics[width=0.95\linewidth]{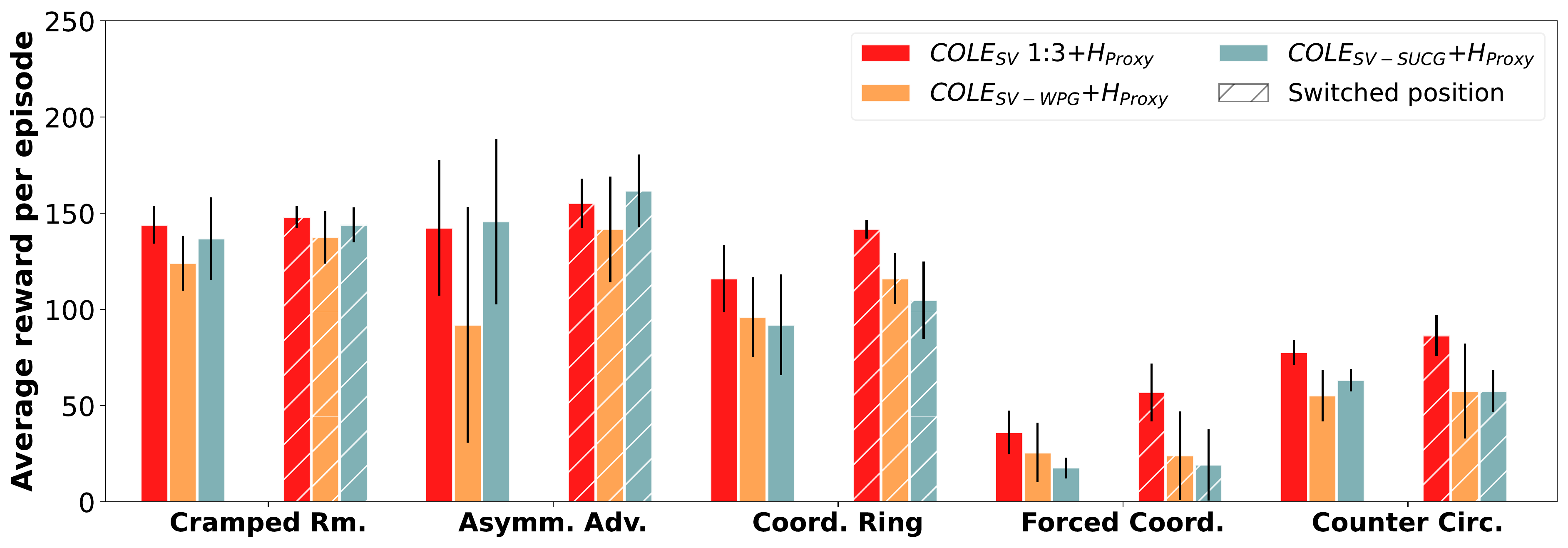}
    \caption{
    \changes{Performance comparison of module effectiveness in \algo performance.
    The performance is presented in terms of mean episode rewards over 400 timesteps trajectories, when paired with the unseen human proxy partner $H_{proxy}$.
  $\text{COLE}{\textit{\tiny{SV-WPG}}}$ excludes the WPG element (Eq.~\ref{eq:wpg}) while computing the Shapley value. 
  $\text{COLE}{\textit{\tiny{SV-UCGG}}}$ eliminates the SUCG component (Eq.~\ref{eq:SUCG}) while sampling training partners.
    The results include the mean and standard error over five different random seeds. The gray bars indicate the rewards obtained when playing with themselves; the hashed bars indicate the performance when starting positions are switched.}}
    \label{fig:ablation}
\end{figure}

\subsection{\changes{Ablation Study}}
\label{exp:ablation}

\changes{
In this section, we aim to investigate the efficiency of each component within our proposed algorithm. 
Refer to Fig.~\ref{fig:ablation} where we compare the performance of our algorithm against three ablated models namely, $\text{COLE}{\textit{\tiny{SV-WPG}}}$, $\text{COLE}{\textit{\tiny{SV-UCGG}}}$, thereby study the individual impact of these specific components.
Specifically, $\text{COLE}{\textit{\tiny{SV-WPG}}}$ excludes the WPG element (Eq.~\ref{eq:wpg}) while computing the Shapley value. 
Therefore, in the case of $\text{COLE}{\textit{\tiny{SV-WPG}}}$, the computation of the coalition value employs the average of all utilities within the coalition. That is, $v(C)=\frac{1}{n^2} \sum_{i\in C}\sum_{j\in C}\rvw(i,j)$, where $C$ represents a coalition within the full coalition set $\gN$.
Additionally, we examine the effect of the SUCG element (Eq.~\ref{eq:SUCG}) on the performance of our proposed algorithm during the sampling of training partners. This model is referred to as $\text{COLE}{\textit{\tiny{SV-UCGG}}}$.
}

\changes{Fig.~\ref{fig:ablation} illustrates the performance in terms of average episode rewards over trajectories of 400 timesteps, when coupled with the unseen human proxy partner $H_{proxy}$. This data presents both the mean and the standard error across five unique random seeds. The gray bars reveal the rewards yielded when the models compete against themselves, while the hashed bars represent the performance when the starting positions are alternated.
It is evident from the figure that the WPG element (Eq.~\ref{eq:wpg}) significantly contributes to the performance when interacting with the human proxy partner, particularly noticeable in the Asymm. Adv. layout.
Furthermore, as depicted in Fig.~\ref{fig:ablation}, all baseline models demonstrate subpar performance when coordinating with the human proxy model in position 0 of the Asymm. Adv. layout.
The computation of the Shapley value employing the WPG is fundamental to resolving the asymmetry, as depicted in Fig.~\ref{fig:main_exp}. 
We think the reason behind it is that WPG can provide a just evaluation of each position's strategic cooperative ability.
In relatively straightforward layouts such as Cramped Rm. and Asymm. Adv., the algorithm displays performance comparable to the ablated models $\text{COLE}{\textit{\tiny{SV-WPG}}}$, $\text{COLE}{\textit{\tiny{SV-UCGG}}}$. However, in the remaining three complex layouts, there is a significant enhancement in the \algo's performance.
}

\begin{table}
    \centering
    \resizebox{\linewidth}{!}{
    \begin{tabular}{ccccccc}
    \toprule
    \multirow{2}{*}{\textbf{Methods}} & \multirow{2}{*} {\textbf{Position}}&\multicolumn{5}{c}{\textbf{Layouts}}\\
\cline{3-7} &&
    \bf{CRAMPED RM.} & \bf{ASYMM. ADV.} & \bf{COORD. RING} & \bf{FORCED COORD.} & \bf{COUNTER CIRC.} \\
    \midrule
    \multirow{3}{*}{\bf{\algoR}} &\bf{L} & 131.20 (14.18) &158.40 (31.66) &80.80 (9.60) &21.60 (10.91) &63.20 (9.93) \\
    \cline{2-7}& \bf{R} & 135.20 (14.40) &156.80 (19.17) &116.00 (8.39) &27.20 (26.70) &52.00 (10.73) \\ 
    \cline{2-7}& \bf{AVG} & 133.20 (14.43) &157.60 (26.18) &98.40 (19.77) &24.40 (20.59) &57.60 (11.76) \\
    \midrule
    \multirow{3}{*}{\bf{\algo}} & \bf{L} & 144.00 (9.80) &142.40 (35.29) &116.00 (17.53) &36.00 (11.31) &77.60 (6.50) \\
    \cline{2-7}& \bf{R} & 148.00 (5.66) &155.20 (12.75) &141.60 (4.80) &56.80 (15.05) &86.40 (10.61) \\
    \cline{2-7} & \bf{AVG} & 146.00 (8.25) &148.80 (27.29) &128.80 (18.14) &46.40 (16.89) &82.00 (9.84) \\
    \bottomrule
    \end{tabular}
    }
    \caption{\changes{Performance comparison of \algo and \algoR on five layouts. 
    Mean scores and standard errors (denoted in parentheses) are measured over five different seeds. 
    In the Position column, L and R signify respective initial positions on the left and right of the layout. The term AVG stands for the average score obtained from both sides.
    \algo has demonstrated significantly superior performance compared to \algoR across all layouts, with the exception of the Asymm. Adv. layout. Within this specific Asymm. Adv. setting, \algoR outperformed in the left position and exhibited a comparable score in the right position.
    }
    }
    \label{tab:R_SV}
\end{table}

\subsection{\changes{Comparison of \algo and \algoR}}
\label{exp:solver}
\changes{Table~\ref{tab:R_SV} presents a comparative analysis of the coordination payoffs achieved by two practical algorithms, \algo and \algoR, across five different layouts.
The results, acquired via five distinct seed values with the human proxy model $H_{proxy}$, feature standard errors, which are indicated within parentheses. The column named `position' designates the diverse initial positions of the two practical algorithms respectively; here, `L' stands for left, `R' for right, while `AVG' signifies the average derived from the two positions.
}

\changes{
As demonstrated in Table~\ref{tab:R_SV}, \algo, utilizing the Shapley value as the core, substantially outperforms \algoR in the Cramped Rm., Coord. Ring, Forced Coord., and Counter Circ. layouts. For the latter, more challenging layouts (Coord. Ring, Forced Coord., and Counter Circ.), the average scores of \algo with the $H_{proxy}$ model have shown approximately 30\%, 91\%, and 43\% improvements over \algoR.
Interestingly, despite the two algorithms demonstrating similar performance on the right-hand starting position of the Asymm. Adv. layout, \algoR exhibits superior performance when initiated from the left-hand starting position.
}

\subsection{Evaluation with Human Players}
\label{exp:human}

We recruited 130 students as participants from universities in different majors, and each provided written informed consent.
Each participant received a reward of 50 CNY for their participation. 
To motivate them to be more engaged and attentive, we established a goal (a number of rewards) for each layout. 
Participants who successfully reached the goal were awarded an additional bonus of 5 CNY for each layout. 
Experiments were conducted online where each participant completed their own experiment on a certain web page using a computer, and each experiment costs about 15 to 20 minutes. 
We do not collect any personally identifiable information, and no significant risk to participants was expected.

\subsubsection{Experimental Setting}
\label{sec:exp_human_setting}
Participants were first acquainted with the experiment and the rules of Overcooked, with details provided in Section \ref{sec:cole_platform}. Each participant played a sequence of 5 pairs of games, totaling 10 rounds, with 2 agents (one being COLE, while the other was randomly selected from the four agents in section \ref{subsec:settings}). The experiment employed the in-group setting for each baseline and \algo pair. We also randomized the order of agents within each pair to account for skill differences arising from the inner order variance.

The sequence of five different layouts remained consistent, and we did not compare objective scores across layouts. The five pairs corresponded to five distinct layouts, arranged in the same order for all experiments (1. Cramped Rm., 2. Asymm. Adv., 3. Coord. Ring, 4. Forced Coord., 5. Counter Circ.). Each pair featured one round with the \algo agent and one round with the other agent. Under these conditions, we can assume that each participant possesses the same skill level and prior knowledge when encountering the same layouts.

During the experiment, participants were unaware of the specific algorithm names. To ensure fairness, only the color of the chef's hat in the game was used to differentiate between the two algorithms. Each game lasted for 1 minute (approximately 400 steps).

\subsubsection{Results of Human Evaluation}
\label{sec:exp_human_results}

\begin{figure}[t!]
\centering
\subfigure[Intention]{
\begin{minipage}[t]{0.5\linewidth}
\centering
\includegraphics[width=.96\linewidth]{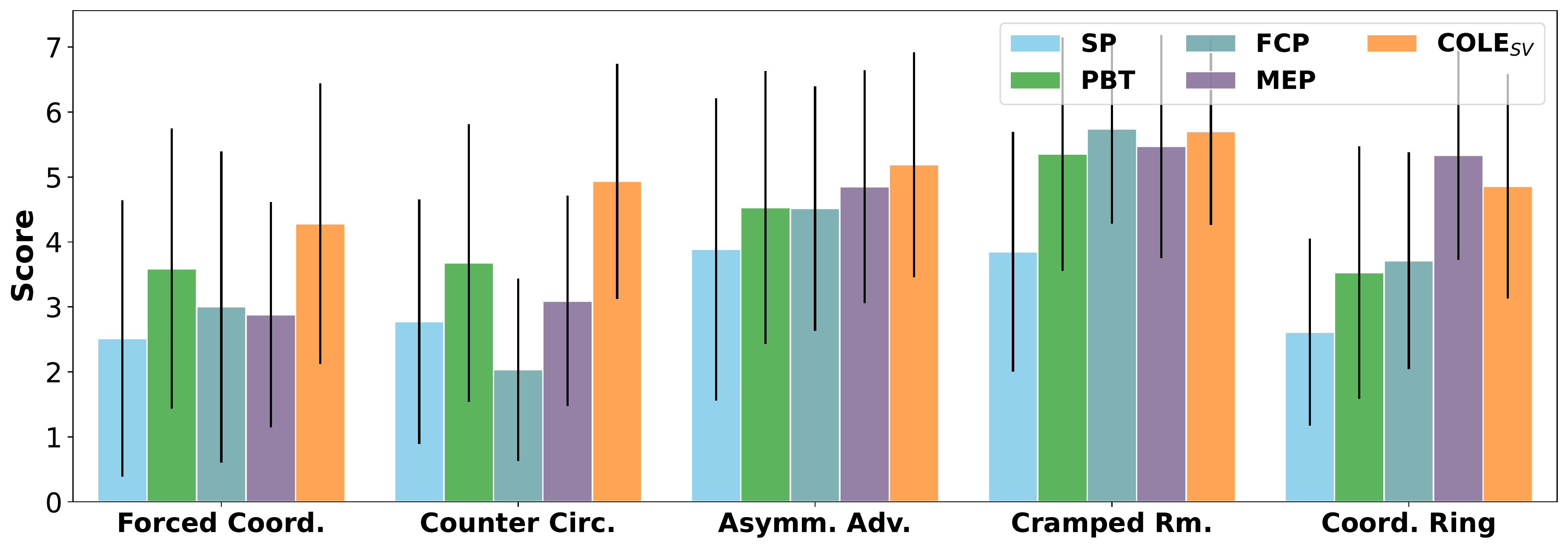}
\end{minipage}%
}%
\subfigure[Contribution]{
\begin{minipage}[t]{0.5\linewidth}
\centering
\includegraphics[width=.96\linewidth]{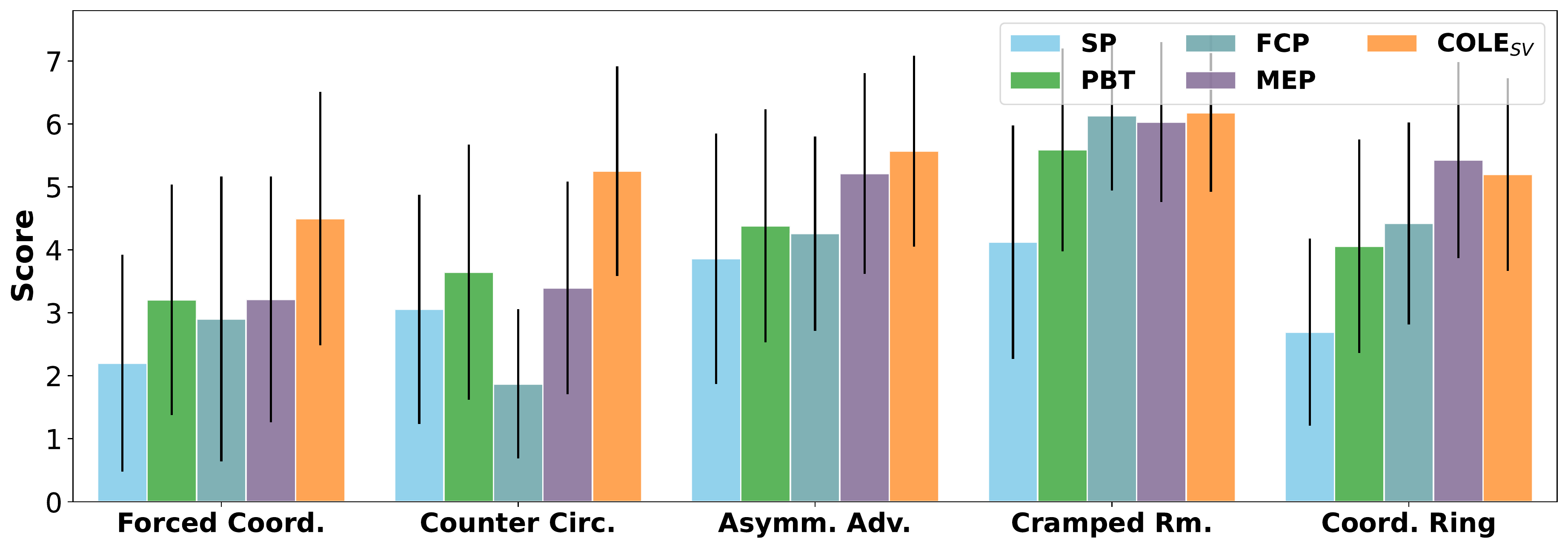}
\end{minipage}
}%

\subfigure[Teamwork]{
\begin{minipage}[t]{0.5\linewidth}
\centering
\includegraphics[width=.96\linewidth]{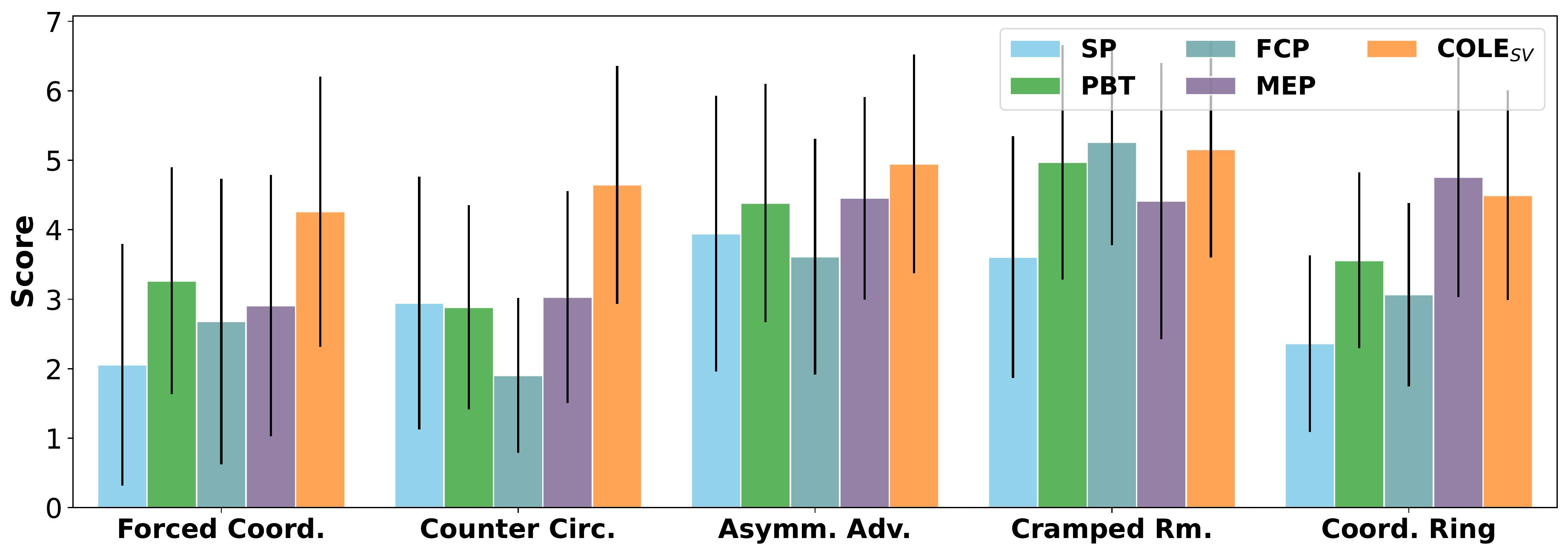}
\end{minipage}
}%
\subfigure[Rewards]{
\begin{minipage}[t]{0.5\linewidth}
\centering
\includegraphics[width=.96\linewidth]{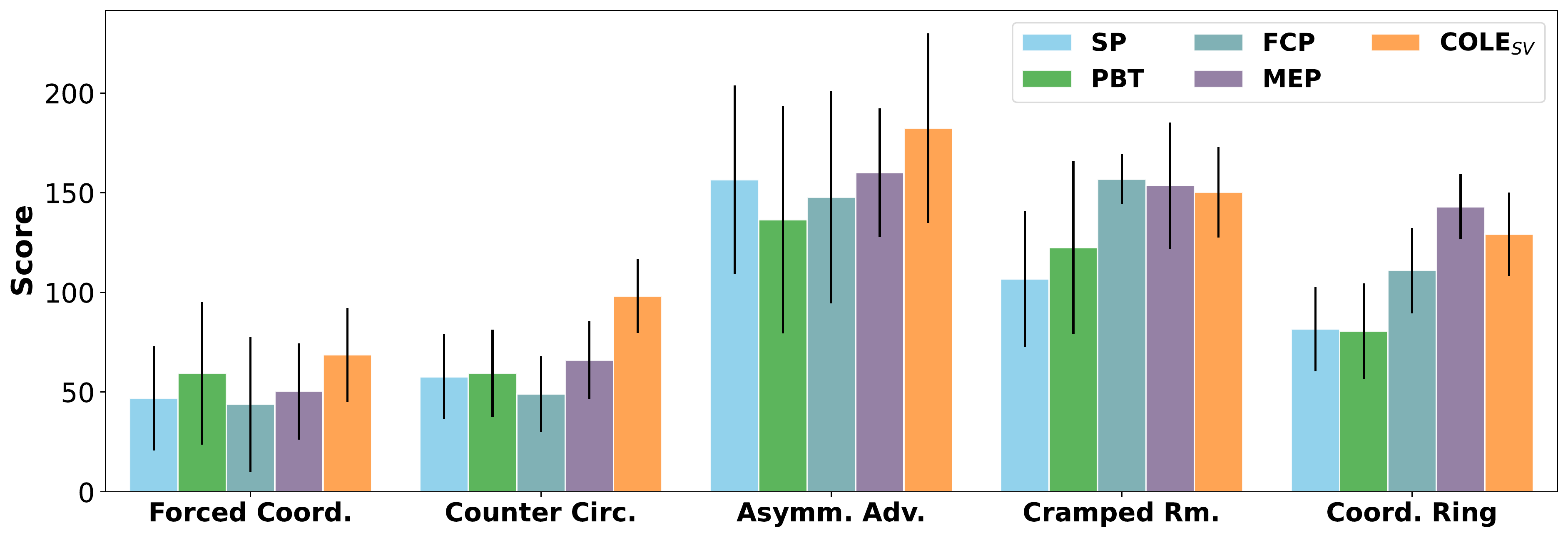}
\end{minipage}%
}%
\label{fig:sub_metrics_human}
\centering
\caption{
Comparison of subjective evaluation scores and rewards between our \algo method and baseline algorithms across five different layouts. ``Intention'', ``Contribution'' and ``Teamwork'' are shorts for ``I understand the agent's intentions'', ``The agent is contributing to the success of the team'', and ``the agent and I have good teamwork'', respectively. Fig. (d) depicts the average rewards comparison. Our \algo method obviously outperforms the baselines in Forced Coordination, Counter Circulation, and Asymmetric Advantage. The first three layouts necessitate cooperation for higher rewards. In the simplest layout, Cramped Room, PBT, FCP, MEP, and \algo all achieve comparably high scores when playing with humans. In the Coordinated Ring layout, \algo's performance is slightly below MEP but surpasses the other algorithms.}
\end{figure}

In this section, we discuss the findings from our Human-AI experiment. We initially recruited 148 participants from Shanghai Jiaotong University to participate in the study. However, we had to exclude 18 questionnaires from our analysis due to issues such as incomplete responses or negative scoring (e.g., awarding 0 points to all aspects). Thus, our final sample consisted of 130 valid data entries.

The demographic breakdown of the 130 valid participants is as follows: 91 males and 39 females. In terms of prior experience with the Overcooked video game, 70 participants had never played it, 28 had played it but did not consider themselves skilled, 25 regarded themselves as intermediate players, and 7 viewed themselves as experts.

As described in Section~\ref{subsec:settings}, all participants are required to fill in after-game and final questionnaires. 
The after-game questionnaires are about the subjective evaluation of the playing between participants and one AI agent. 
The subjective evaluation needs human players to score three questions from 1 to 7, ``the agent and I have good teamwork'' (short as teamwork), ``the agent is contributing to the success of the team'' (short as contribution), and ``I understand the agent's intentions''  (short as intention). 
As illustrated in Fig.~\ref{fig:sub_metrics_human}, our proposed \algo surpasses all baseline algorithms in three layouts that require cooperation (Forced Coord., Counter Circ., and Asymm Adv) based on subjective evaluations of intention, contribution, teamwork, and the objective metric of game rewards. 
Additionally, human players perceive that all methods, except SP, exhibit similar performance in the Crampped Rm layout. 
In Crampped Rm., both players are situated within a small rectangular area, enabling them to achieve high scores even with minimal cooperation.

In the {Coord. Ring} layout, \algo marginally underperforms MEP yet surpasses other baseline algorithms. Analyzing human and MEP co-play trajectories revealed MEP's enhanced performance in human subjective experiments arises from its consistent counterclockwise strategy on the layout. This allows humans to effortlessly adapt to the MEP agent during human-AI experiments by selecting the opposite direction. This predictability leads to misconceptions in questionnaire responses regarding AI's contribution, intention, and teamwork. Participants may perceive a better grasp of MEP's intentions and a more substantial contribution from the agent, but this is primarily due to its easily adaptable and fixed strategy. Nonetheless, when MEP collaborates with diverse partners lacking human-level intelligence, its performance declines, as shown in Fig.\ref{fig:main_exp}. 
In contrast, COLE's strategies exhibit greater diversity and are less predictable than MEP's. COLE adapts its routes based on varying situations, which can lead to more conflicts. Consequently, in human subjective evaluations, COLE slightly underperforms compared to MEP.
Trajectory visualizations of humans with MEP and COLE are available on our demo page.

\begin{table}[t]
    \centering
    \begin{tabular}{c|c|c|c|c}
    \hline

    \multirow{2}{*}{\textbf{Metrics}}&\multicolumn{4}{c}{\textbf{\algo v.s. Baselines }}\\
\cline{2-5}

 & \textbf{SP} & \textbf{PBT} & \textbf{FCP} & \textbf{MEP}\\
    \hline
         \textbf{Fluency} &$0.87$   &$0.71$   &$0.78$   &$0.60$   \\
         \textbf{Preference} &$0.88$   &$0.70$   &$0.85$   &$0.68$    \\ 
         \textbf{Understanding} &$0.80$   &$0.63$   &$0.78$   &$0.61$ \\
    \hline
    \end{tabular}
    \caption{The average scores obtained by \algo when participants were asked to answer three evaluation questions comparing \algo to an assigned baseline. The questions were: ``Which agent cooperates more fluently?'' (abbreviated as fluency), ``Which agent did you prefer playing with?'' (abbreviated as preference), and ``Which agent did you understand better?'' (abbreviated as understanding). A value closer to 1 indicates a stronger preference for \algo among participants.}
    \label{tab:my_label}
\end{table}

In addition to evaluating individual games, human players are also asked to compare the two AI models they played with. They are required to rate the AI performances across three dimensions: fluency through ``Which agent cooperates more fluently?'', preference by ``Which agent did you prefer playing with?'', and understanding with ``Which agent did you understand better?''. The mean rating scores are presented in Table~\ref{tab:after_game}. 

When the value is greater than 0.5, it indicates that human players prefer \algo over the baseline. Consequently, a value closer to 1 signifies a more pronounced preference for \algo compared to the baseline model. As demonstrated in the table, all values are larger than 0.6, suggesting that human players concur that \algo performs better. 

When compared to the SP method, \algo receives the highest scores (above 0.8), implying that participants believe that \algo will outperform SP with a probability of over 80\%. In comparison to the state-of-the-art MEP, \algo's scores also exceed 0.6. These results indicate that human players agree that \algo performs better in terms of fluency, preference, and understanding compared to the baseline.

In summary, human players concur that \algo is more understandable and contributes significantly to their teamwork, particularly in the three layouts that necessitate cooperation: Forced Coord., Counter Circ., and Asymm Adv.

\section{Conclusion}
\label{sec:con}
\changes{In this study, we introduce graphic-form games and preference graphic-form games as intuitive reformulations of cooperative games. These reformulations can effectively evaluate and pinpoint cooperative incompatibility during the learning process of Zero-Shot Coordination (ZSC) algorithms, thereby further addressing the issue of cooperative incompatibility in zero-shot human-AI coordination.
Additionally, we propose the \framework, designed to iteratively approximate the best response preferred by teammates within the latest population. 
Theoretically, we provide proof that \framework converges towards the locally optimal strategy preferred by the rest of the population. If the in-degree centrality is selected as the preference centrality function, the convergence rate would achieve Q-sublinear status.}

\changes{We also implemented an online pipeline for the Overcooked Human-AI experiment, which allows for easy modifications of questionnaires, model weights, and other elements.
To the best of our knowledge, it is the first comprehensive human-AI experimentation pipeline for zero-shot human-AI coordination evaluation including turnkey experimental procedures and scale design.
}
Through the pipeline, we engaged 130 participants in human experiments, and the results highlighted a general preference for our approach over SOTA methods across various subjective metrics.
Furthermore, our objective experiments in the Overcooked environment demonstrated that our algorithm, referred to as \algo, exceeded the performance of SOTA algorithms when coordinating with new AI agents and the human proxy model. It also demonstrated the efficient resolution of cooperative incompatibility.

\textbf{Limitations.} Convergence of \framework requires satisfying the assumption that the preference centrality of the newly generated strategy is ranked in the top $k$, which is controlled by an additional hyperparameter.
If $k$ is too big with a lower ranking, the \framework will slowly converge to the best-preferred strategy. 
On the other hand, if $k$ is too small and refers to a higher ranking, each generation's assumption will not be guaranteed, which will easily cause learning failure.
Besides, in our implemented algorithm~\algo, we introduce the Shapley Value as the tool and develop the Graphic Shapley Value to analyze the cooperative ability.
Although we have utilized Monte Carlo permutation sampling to reduce the computational complexity, the computational complexity is still high.
Therefore, we only maintain a population of 50 for the limitation of computational resources.

\textbf{Future Work.} Future work will focus on performing an adaptive mechanism that automatically selects a suitable value for the hyperparameter $k$ to improve the convergence rate without promoting iterations of each update. 
Meanwhile, improving the efficiency of the graphic Shapley Value solver and exploring other cooperative ability evaluation solvers are important in developing the framework.
Future work also includes the development of practical algorithms for more complex games except for Overcooked.

\clearpage
\acks{Yang Li is supported by the China Scholarship Council (CSC) Scholarship.
The Shanghai Jiao Tong University team is supported by National Natural Science Foundation of China (No.62106141) and Shanghai Sailing Program (21YF1421900).
The authors thank Xihuai Wang for his kind assistance and advice.
We also thank Jia Guo and Tao Shi for their support for our Human-AI experiment platform development.
}

\appendix
\appendix

\section{Proofs of Theorem~\ref{thm: converge}}
\label{appendix:proofs_thm}
\FirstTHM*
\begin{proof}

According to the definition of the local best-preferred strategy, the local optimal strategy is the node with zero preference centrality ($\eta$). Therefore, we need to prove that the value of $\eta$ will approach zero.

\changes{
Let $\eta_t$ denote the centrality value of the preference of the updated strategy $s_t$ in generation $t$, where $0\leq \eta \leq 1$. 
We first remark on the RL oracle defined in Eq.~\ref{eq:oracle_approx}:
$
    s_{n+1} = \operatorname{oracle}(s_n, \gJ(s_n, \phi)),
\ with\ 
\mathcal{R}(\eta(s_{n+1}))>k.
$
Under the assumption of the approximated oracle functioning effectively, it follows that the preference centrality $\eta_t$ associated with generated strategy $s_t$ at generation $t$ resides among the first $k$ positions when arranged in ascending order of centrality values.
For simplicity, We define a group $g_t$ at generation $t$ as the set of strategies with the lowest $k$ preference centrality values. 
\begin{lemma}
\label{lemma:group}
    Provided that the approximated oracle is functioning effectively, the maximal preference centrality value, denoted as $\eta_{g_t}$, within group $g_t$ is expected to exhibit a diminishing trend with each successive generation. 
    \end{lemma}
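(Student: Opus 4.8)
The plan is to recast Lemma~\ref{lemma:group} as a monotonicity statement for a single order statistic of the centrality values and then to control the perturbation caused by inserting a new node. Writing $\eta^{(t)}(i)$ for the preference centrality of node $i$ computed in the preference graph $\gG^\prime_t$ at generation $t$, I would first observe that $\eta_{g_t}=\max_{i\in g_t}\eta^{(t)}(i)$ is exactly the $k$-th smallest value in the sorted list of centralities of $\gN_t$, since $g_t$ is by definition the set of the $k$ most-preferred (lowest-centrality) strategies. Thus the lemma is equivalent to showing that this $k$-th order statistic of the centrality distribution does not increase from generation $t-1$ to generation $t$.

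First I would invoke the effective-oracle hypothesis of Eq.~\ref{eq:oracle_approx}: the newly produced strategy $s_t$ satisfies $\mathcal{R}(\eta(s_t))>k$, so $s_t$ belongs to $g_t$ and in particular $\eta^{(t)}(s_t)\le \eta_{g_t}$. Because the trainer approximates the best response to the cooperative-incompatible mixture $\phi$, which places its mass on the least-preferred strategies, and because the game is common-payoff so that $\rvw$ is symmetric, the previously high-centrality strategies gain $s_t$ as a strictly better partner; in the preference graph $\gG^\prime_t$ these strategies redirect their unique out-edge toward $s_t$, concentrating in-edges on $s_t$ and giving it a large in-degree and hence a small $\eta^{(t)}(s_t)$.

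The core step is then a comparison argument. If the in-degrees of the strategies retained in the top group were held fixed, inserting the new low-centrality element $s_t$ could only push the old $k$-th smallest centrality one position to the right, so the new $k$-th smallest value would satisfy $\eta_{g_t}\le\eta_{g_{t-1}}$. I would make this precise by showing that the redirections triggered by $s_t$ weakly increase the in-degrees of the nodes that remain most preferred, while the single node displaced from the top group is replaced by $s_t$, whose centrality is no larger than that of the displaced node.

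The main obstacle is that adding a node recomputes every in-degree and changes the normalization, since $|\gN_t|=|\gN_{t-1}|+1$; consequently centralities at consecutive generations are not directly comparable, and a redirection toward $s_t$ can in principle strip an in-edge from a retained top node and raise its centrality. I would address this by arguing with the un-normalized in-degrees wherever possible and by using the effective-oracle assumption to guarantee that $s_t$ is preferred by \emph{most} other strategies, so that the redistribution of out-edges is dominated by edges flowing \emph{into} $s_t$ rather than away from the other top nodes. The delicate part is verifying that no retained member's centrality rises enough to exceed $\eta_{g_{t-1}}$, which is precisely what pins down the claimed diminishing trend and, together with the lower bound $\eta\ge 0$, sets up the monotone-convergence argument used in Theorem~\ref{thm: converge}.
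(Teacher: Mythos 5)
Your core argument is the paper's own proof: the oracle constraint of \eqref{eq:oracle_approx} places $s_t$ among the $k$ lowest-centrality strategies, so the insertion displaces the previous $k$-th ranked element, and the $k$-th order statistic of the centralities --- which is exactly $\eta_{g_t}$ --- is replaced by either the old $(k-1)$-th value or $\eta(s_t)$, hence drops. The paper states precisely this and stops there, concluding that ``under any given conditions'' the maximum within the group decreases. The obstacle you flag in your last paragraph is therefore not an artifact of your route but a genuine gap in the published proof as well: when $s_t$ is added, every preference centrality is recomputed on the enlarged graph --- a node that previously pointed at a retained top-$k$ strategy may redirect its unique out-edge toward $s_t$, stripping in-degree from that strategy, and with in-degree centrality the normalization itself shifts (from $\eta(i)=1-I_i/(n-1)$ to $1-I_i/n$), so even an unchanged in-degree yields a \emph{larger} $\eta$. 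The paper's proof implicitly treats the retained strategies' centralities as static across generations and never rules out that a retained member's centrality rises above $\eta_{g_{t-1}}$. Your proposed repair, however, does not close this: the hypothesis in \eqref{eq:oracle_approx} constrains only the \emph{rank} of $\eta(s_t)$, not how many strategies prefer $s_t$, so the claim that edge redistribution is ``dominated by edges flowing into $s_t$'' does not follow from the stated assumptions, and working with un-normalized in-degrees does not remove the dependence on the growing population size. In short, your proposal subsumes the paper's argument and is more candid about where it is fragile, but the delicate monotonicity step you isolate remains unproved in both your sketch and the paper; a rigorous version would need an additional assumption (e.g., that centralities are compared after recomputation and the oracle guarantee is imposed on the recomputed ranking at every generation, with the strict decrease $\epsilon_{t-1}>0$ of the subsequent theorem then holding only in expectation or by fiat).
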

\begin{proof}
    In light of the approximated oracle's definition, the preference centrality value of the strategy $s_t$, generated at generation $t$, will occupy one of the first $k$ positions when sorted in ascending order based on preference centrality values. Consequently, the initial $k$ strategies of group $g_t$ will undergo an update, in which the strategy ranking $k$-th with the highest preference centrality value is substituted by either the $(k-1)$-th strategy or the newly generation strategy, $s_t$. Under any given conditions, the maximum preferential value within the group will experience a reduction.
\end{proof}
}

\changes{
Let $\eta_{g_t}$ denote the largest preference centrality in the group $g$.
Thus, we can derive the subsequent equation based on Lemma~\ref{lemma:group}. 
\begin{equation} 
\eta_{g_t} = \eta_{g_{t-1}} - \epsilon_{t-1}, 
\end{equation} 
where $\epsilon_{t-1}$ is a positive value and $0< \epsilon \leq \eta_{g_{t-1}}$.
By further simplifying the equation, we have
\begin{equation} 
\begin{aligned} 
\eta_{g_{t}} &= \eta_{g_{t-1}} - \epsilon_{t-1},\\ &=\eta_{g_{t-1}} -\alpha_{t-1} \eta_{g_{t-1}},\\ &=\beta_{t-1} \eta_{g_{t-1}}, \end{aligned} 
\end{equation} 
where the second line employs $\eta_{g_{t-1}}$ to substitute the residual term, with the adjustment parameters $0 < \alpha_{t-1} \leq 1$ and $\beta_{t-1} = 1 - \alpha_{t-1}.$}

\changes{
Assuming that the centrality value of the preference in the initial step is $0\leq \eta_0 \leq 1$, we can recursively calculate the following formula: 
\begin{equation} \begin{aligned} \eta_{g_t} &=\beta_{t-1} \eta_{g_{t-1}},\\ &=\beta_{t-1} \beta_{t-2} \eta_{g_{t-2}},\\ &=\cdots, \\ &=\prod_{i=0}^{t-1} \beta_i \times \eta_{g_0}. 
\end{aligned} 
\label{eq:iter_eta} \end{equation} For any $\beta \in {\beta_0, \cdots, \beta_{t-1}}$, we have $1>\beta\geq 0$. In addition, we set $\beta_t$ as a very small positive number if $\eta_t=0$.
Furthermore, we ascertain that the coefficient $\beta$ is not consistently zero. This is due to the fact that $\beta = 0$ would imply a preference centrality of zero for the strategy, which is not universally attainable within the context of the RL oracle. This very limitation underpins our rationale for introducing the approximated RL oracle.
Thus, we can conclude that $\eta_t$ will approach zero within the population as outlined in~\eqref{eq:iter_eta}.} 

\changes{
Through this proof, we have substantiated that under the effective functioning of the RL oracle as characterized by Eq.~\ref{eq:oracle_approx}, the sequence ${s_i}$ is progressing towards the zero of preference centrality within the population. That is, the sequence is converging to a strategy denoted by $s^*$, which represents a locally best-preferred solution.
}
\end{proof}

\section{Proof of Corollary~\ref{lemma: converge_rate}}
\label{appendix:proofs_corollary}
\FirstLEMMA*
\begin{proof}
    In Theorem~\ref{thm: converge}, we have proved that the strategies generated by the \framework~will converge to the local best-preferred strategy.
When we use the in-degree centrality function as $\eta$, the preference centrality function can be rewritten as:
\begin{equation}
        \eta(i) = 1-\frac{I_i}{n-1},
    \end{equation}
where $I_i$ is the in-degree of node $i$ and $n$ is the size of the strategy set $\gN$.

\changes{
    Therefore, we have
    \begin{equation}
\begin{aligned}
\label{eq:proof_1}
&\lim\limits_{t \to \infty} \frac{|\eta_{t+1} - 0|}{|\eta_{t} - 0|} \\
        = &\lim\limits_{t \to \infty} \frac{\eta_{t+1}}{\eta_{t}} \\
        =& \lim\limits_{t \to \infty} \frac{1-\frac{I_{t+1}}{t}}{1-\frac{I_{t}}{t-1}} \\
        = &\lim\limits_{t \to \infty} \frac{t-1}{t} \frac{t - I_{t+1}}{t-I_t-1} \\
        =&\lim\limits_{t \to \infty} \frac{t - I_{t+1}}{t-I_t-1} \\
        =&1
\end{aligned}
\end{equation}
}

Therefore, using the in-degree centrality, we can conclude that the \framework~will converge to the local optimal strategy at a Q-sublinear rate.
\end{proof}

\section{Experimental Details of \algo}
\label{appendix:cole}
This paper utilizes Proximal Policy Optimization (PPO)~\cite{PPO} as the oracle algorithm for our set of strategies $\gN$, which consists of convolutional neural network parameterized strategies. Each network is composed of 3 convolution layers with 25 filters and 3 fully-connected layers with 64 hidden neurons. To manage computational resources, we maintain a population size of 50 strategies. In instances where the population exceeds this limit, we randomly select one of the earliest ten removal strategies.

We run and evaluate all our experiments on Linux servers, which include two types of nodes: 1) 1-GPU node with NVIDIA GeForce 3090Ti 24G as GPU and AMD EPYC 7H12 64-Core Processor as CPU, 2) 2-GPUs node with GeForce RTX 3090 24G as GPU and AMD Ryzen Threadripper 3970X 32-Core Processor as CPU.
On the Overcooked game environment, \algo takes one to two days on the 2-GPUs machine for one layout's training.

The hyperparameter setup is similar to those in PBT and MEP, which are given as follows. 
\begin{itemize}
    \item The learning rate for each layout is  2e-3 , 1e-3 , 6e-4 , 8e-4 , and 8e-4.
    \item The gamma $\gamma$ is 0.99.
    \item The lambda $\lambda$ is 0.98.
    \item The PPO clipping factor is 0.05.
    \item The VF coefficient is 0.5.
    \item The maximum gradient norm is 0.1.
    \item The total training time steps for each PPO update is 48000, divided into 10 mini-batches.
    \item The total numbers of generations for each layout are 80, 60, 75, 70, and 70, respectively.
    \item For each generation, we update 10 times to approximate the best-preferred strategy.
    \item The $\alpha$ is 1.
\end{itemize}

\section{Implementations of Baselines}
\label{appendix:base}
In this part, we will introduce the detailed implementations of baselines.
We train and evaluate self-play and PBT based on the Human-Aware Reinforcement Learning repository\footnote{\url{https://github.com/HumanCompatibleAI/human_aware_rl/tree/neurips2019}.}~\cite{HARL}  and used Proximal Policy Optimization (PPO)~\cite{PPO} as the RL algorithm.
We implement FCP according to the FCP paper~\cite{FCP} and use PPO as the RL algorithm.
The implementation is based on the Human-Aware Reinforcement Learning repository (the same used in the self-paly and PBT).
The MEP agent is trained with population size as 5, following the MEP paper~\cite{MEP} and used the original implementation\footnote{The code of MEP original implementation: \url{https://github.com/ruizhaogit/maximum_entropy_population_based_training}.}.

\begin{figure}[ht!] \centering    \includegraphics[width=0.98\linewidth]{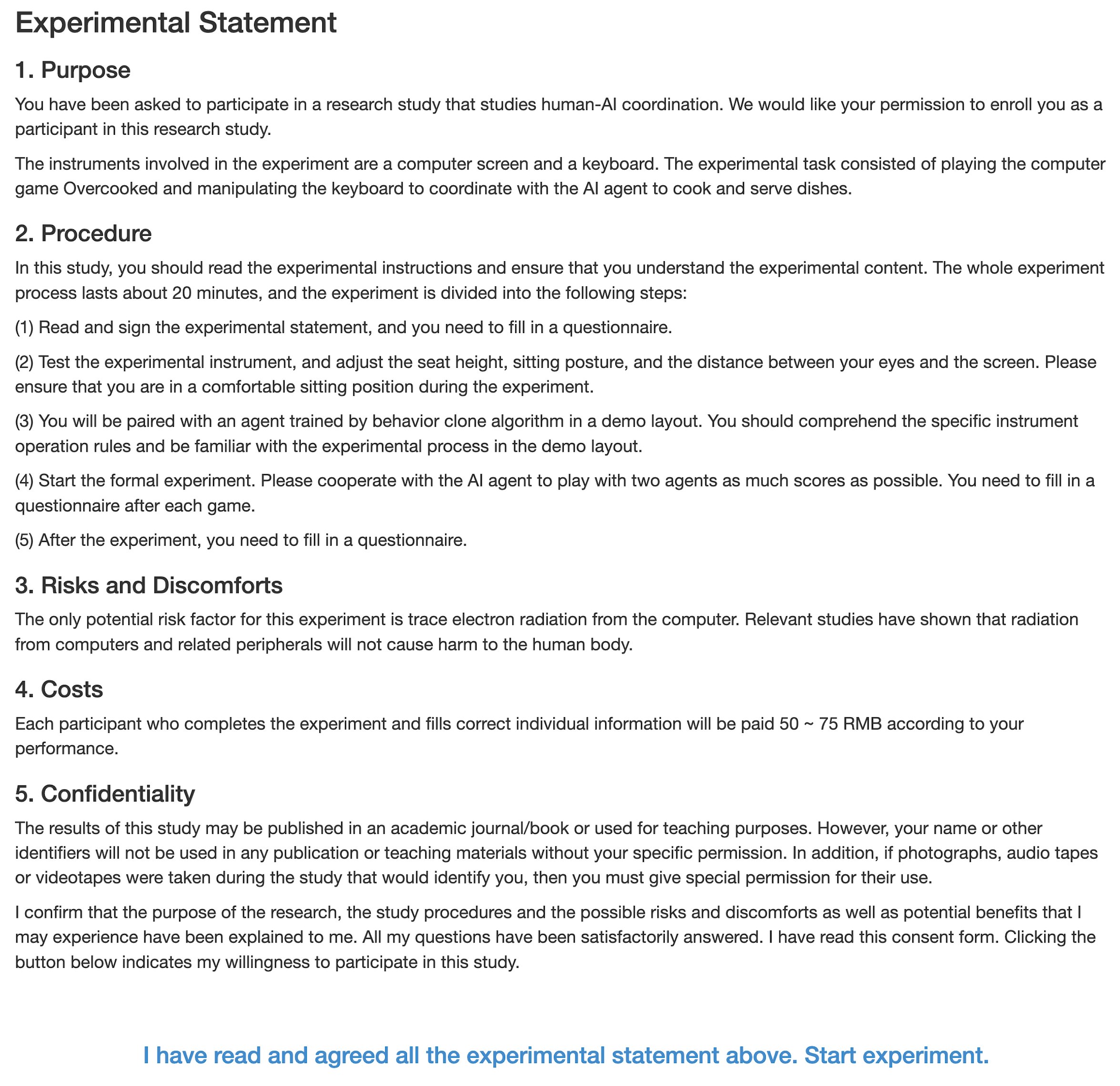}
\caption{Screenshots of the Human-AI Experiment Platform - Experiment Statement. }  
\label{fig:platform_state}     
\end{figure}

\begin{figure}[ht!] \centering    \includegraphics[width=0.98\linewidth]{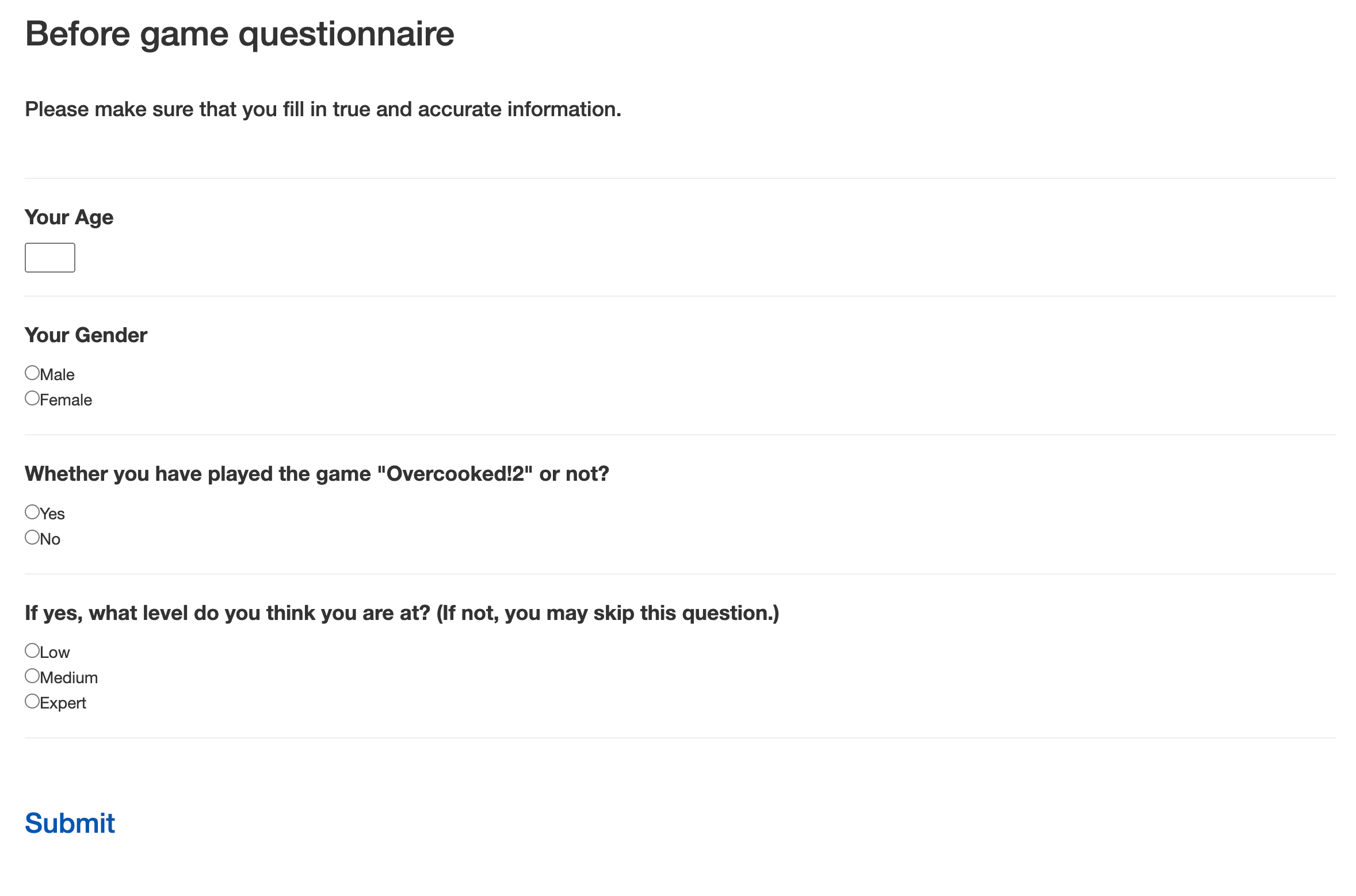}
\caption{Screenshots of the Human-AI Experiment Platform - participant information questionnaire.}  
\label{fig:platform_before}     
\end{figure}

\begin{figure}[ht!] \centering    \includegraphics[width=0.98\linewidth]{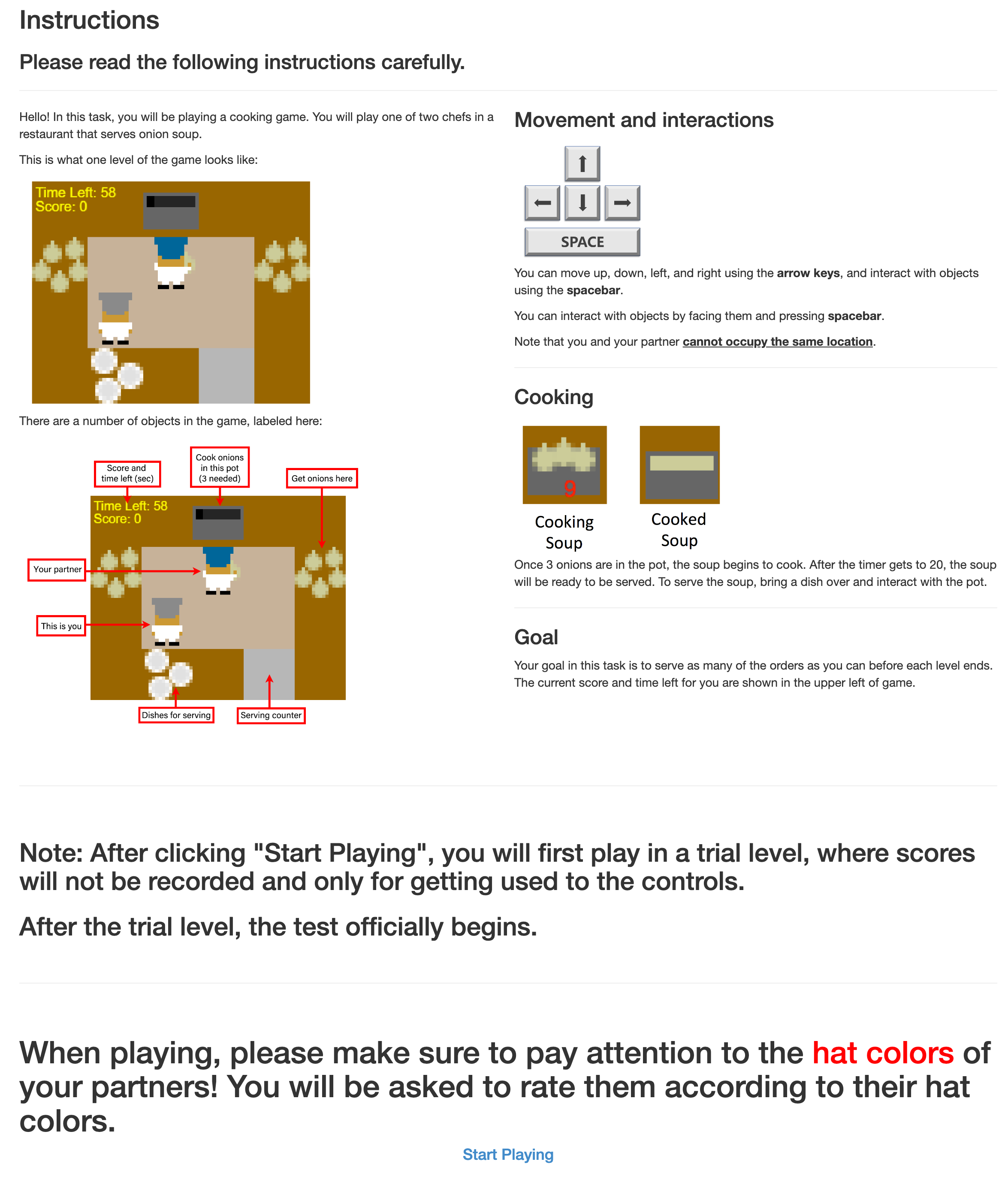}
\caption{Screenshots of the Human-AI Experiment Platform - instruction providing an overview of the environment interface, operation instructions, and game objectives.}  
\label{fig:platform_instruction}     
\end{figure}

\begin{figure}[ht!] \centering    \includegraphics[width=\linewidth]{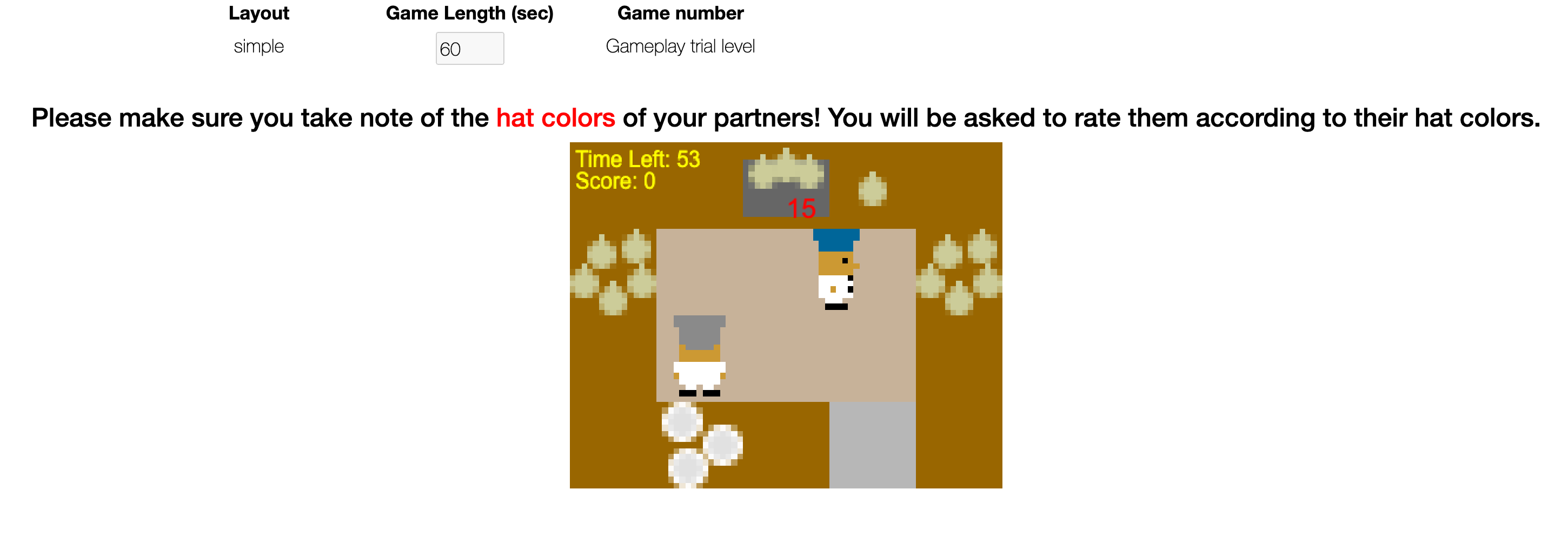}
\caption{Screenshots of the Human-AI Experiment Platform - trial playing: players engage with the human proxy model to familiarize themselves with the system.}  
\label{fig:platform_trial}     
\end{figure}

\begin{figure}[ht!] \centering    \includegraphics[width=\linewidth]{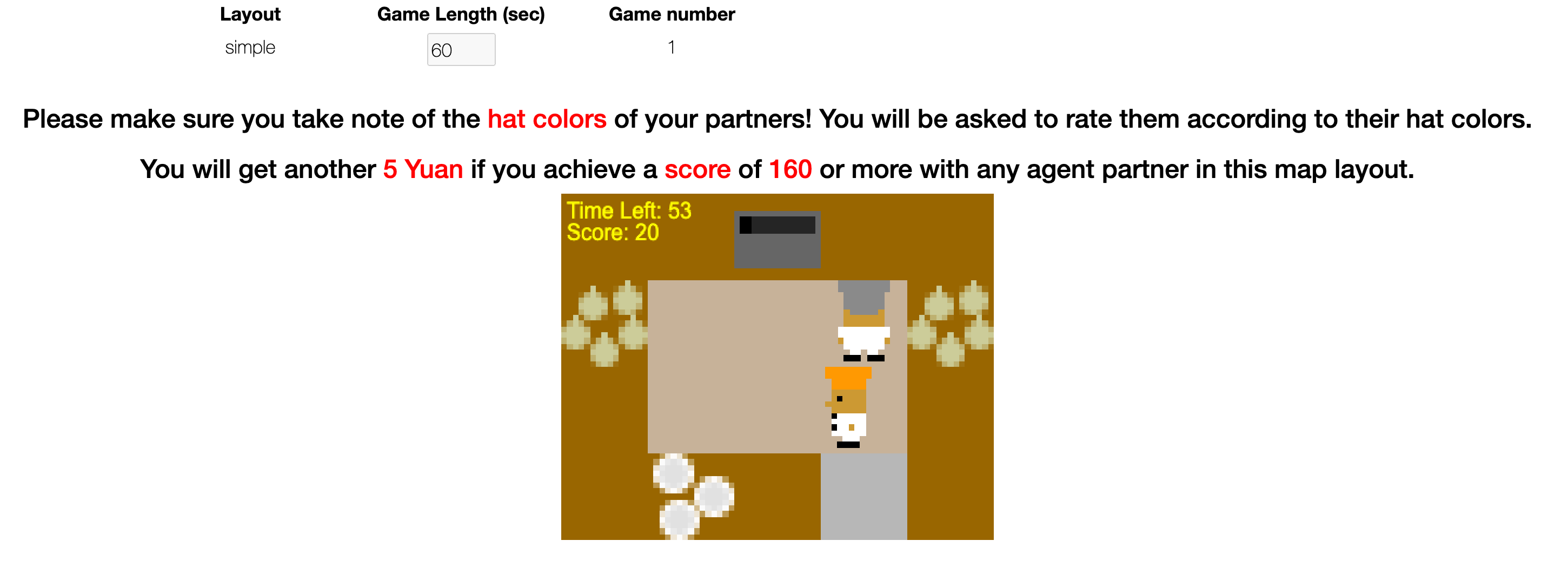}
\caption{Screenshots of the Human-AI Experiment Platform - Game Playing: displaying the layout name, game length (in seconds), and game number at the top, followed by reminders about hat colors and the bonus awarded for the current game. The middle area is the game interface, where human players interact with the AI agent.}  
\label{fig:platform_game}     
\end{figure}

\begin{figure}[ht!] \centering    \includegraphics[width=0.9\linewidth]{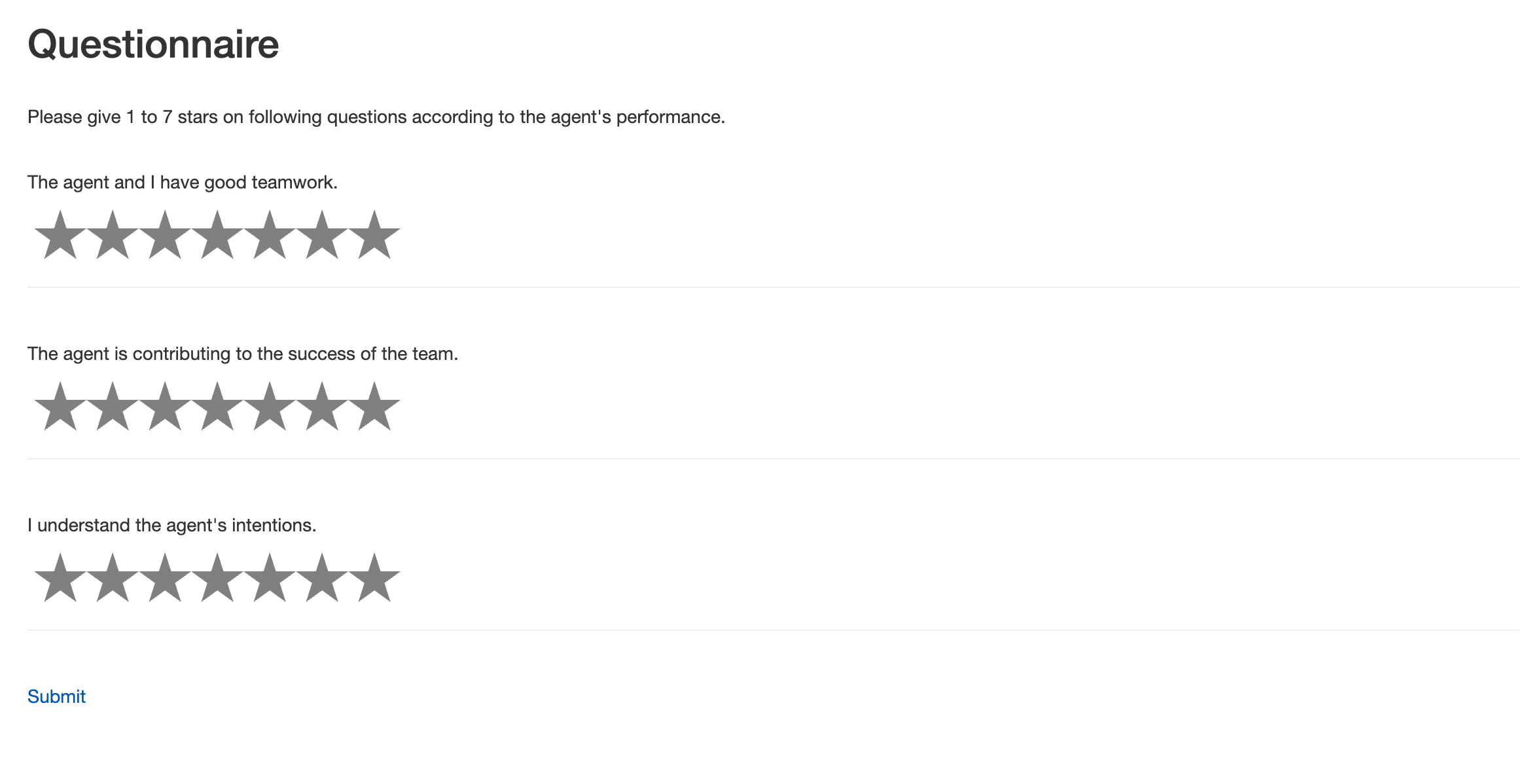}
\caption{Screenshots of the Human-AI Experiment Platform - Individual Questionnaire: Participants score the performance of the AI teammate in the finished game.}  
\label{fig:platform_in}     
\end{figure}

\begin{figure}[ht!] \centering    \includegraphics[width=0.9\linewidth]{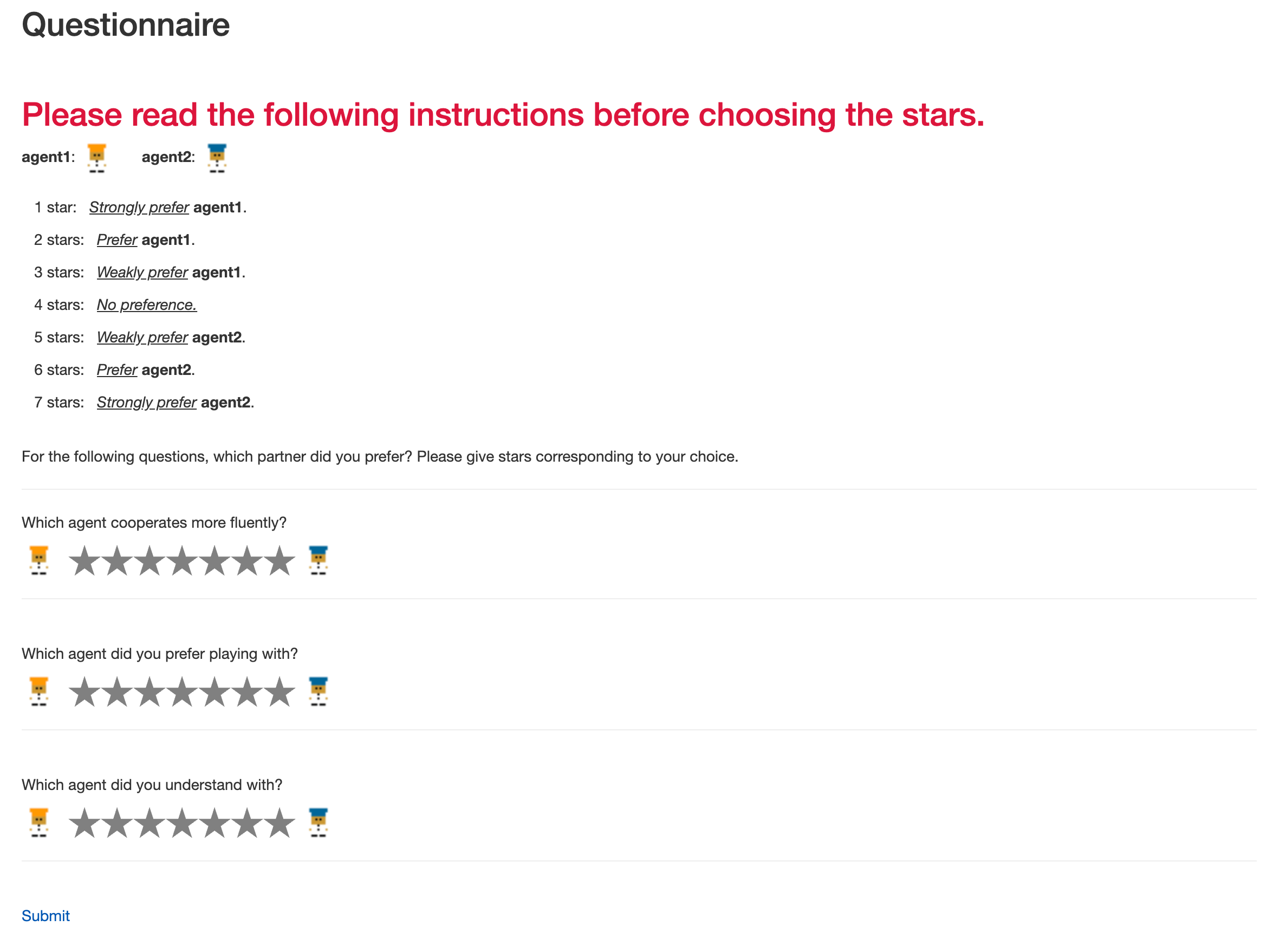}
\caption{Screenshots of the Human-AI Experiment Platform - Final Questionnaire: Participants compare and rank the performance of AI partners.}  
\label{fig:platform_after}     
\end{figure}

\section{Visual Overview of the Human-AI Experiment Platform}
\label{app:screenshot}
In this section, we provide a visual representation of the Human-AI Experiment Platform to offer readers a better understanding of the system's user interface, layout, and functionality. 
By including screenshots of various stages of the experiment process in Fig.~\ref{fig:platform_state}, Fig.~\ref{fig:platform_before}, Fig.~\ref{fig:platform_instruction}, Fig.~\ref{fig:platform_trial}, Fig.~\ref{fig:platform_game}, Fig.~\ref{fig:platform_in}, and Fig.~\ref{fig:platform_after},  we aim to provide a comprehensive overview of the platform, enabling readers to better grasp its design and implementation.

\clearpage


\bibliography{reference}
\bibliographystyle{theapa}

\end{document}